\documentclass[11pt]{article}

\usepackage[utf8]{inputenc}
\usepackage[T1]{fontenc}
\usepackage{lmodern}
\usepackage{amsmath,amssymb,amsthm,amsfonts}
\usepackage{mathtools}
\usepackage{microtype}
\usepackage{enumitem}
\usepackage{geometry}
\usepackage{hyperref}
\usepackage{bm}
\usepackage{dsfont}
\usepackage{tikz-cd}
\usepackage{cite}

\usepackage{mathrsfs}
\usepackage{bbm}


\newcommand{\Xcal}{\mathcal{X}}
\newcommand{\Ycal}{\mathcal{Y}}
\newcommand{\Pcal}{\mathcal{P}}
\newcommand{\Mfrak}{\mathfrak{M}}
\newcommand{\ThetaMan}{\Theta}

\geometry{margin=1in}
\setlist{nosep}

\newtheorem{theorem}{Theorem}[section]
\newtheorem{proposition}[theorem]{Proposition}

\newtheorem{corollary}[theorem]{Corollary}
\theoremstyle{definition}
\newtheorem{definition}[theorem]{Definition}
\newtheorem{example}[theorem]{Example}
\theoremstyle{remark}
\newtheorem{remark}[theorem]{Remark}

\newcommand{\RR}{\mathbb{R}}
\newcommand{\NN}{\mathbb{N}}
\newcommand{\PP}{\mathbb{P}}
\newcommand{\EE}{\mathbb{E}}

\newcommand{\eps}{\varepsilon}

\title{Self-Improving AI Agents through Self-Play}
\author{Przemyslaw Chojecki \\ \small ulam.ai}
\date{December 2, 2025}

\begin{document}
\maketitle

\begin{abstract}
We extend the moduli-theoretic framework of psychometric batteries \cite{Chojecki2025} to the domain of dynamical systems. While previous work established the AAI capability score as a static functional $\Phi_{\mathcal{B}}$ on the space of agent representations $\mathcal{P}(X_{\mathcal{B}})$, this paper formalizes the agent as a flow $\nu_r$ parameterized by computational resource $r$, governed by a recursive \textbf{Generator-Verifier-Updater (GVU)} operator. We prove that this operator generates a vector field on the parameter manifold $\Theta$, and we identify the coefficient of self-improvement $\kappa$ as the Lie derivative of the capability functional along this flow.

The central contribution of this work is the derivation of the \textbf{Variance Inequality}, a spectral condition that is sufficient (under mild regularity) for the stability of self-improvement. We show that a sufficient condition for $\kappa > 0$ is that, up to curvature and step-size effects, the combined noise of generation and verification must be small enough.

We then apply this formalism to unify the recent literature on Language Self-Play (LSP), Self-Correction, and Synthetic Data bootstrapping. We demonstrate that architectures such as STaR \cite{zelikman2022star}, SPIN \cite{chen2024spin}, Reflexion \cite{shinn2023reflexion}, GANs and AlphaZero are not merely heuristics but specific topological realizations of the GVU operator that satisfy the Variance Inequality through filtration, adversarial discrimination, or grounding in formal systems.
\end{abstract}

\section{Introduction}\label{sec:intro}

The central problem in Artificial General Intelligence (AGI) is not the achievement of a specific benchmark score, but the achievement of \emph{ignition}: the point at which an agent can autonomously convert computational resources into capability gains without human intervention. In the framework of \cite{Chojecki2025}, we defined the capability of an agent $\mathcal{A}$ on a battery $\mathcal{B}$ as a functional value $\Phi_{\mathcal{B}}(\rho_{\mathcal{B}}(\mathcal{A}))$. However, for current Large Language Models (LLMs), this value is static once pre-training concludes. As noted in \cite{xu2023expert}, the trajectory of self-improvement for standard LLMs is flat ($\kappa \approx 0$) or decaying due to hallucination drift.

By contrast, systems like AlphaGo Zero \cite{silver2017mastering} exhibited $\kappa \gg 0$, reaching superhuman capability solely through self-play. The disparity lies in the nature of the \emph{verification signal}. Go provides a noiseless, ground-truth verifier (the game rules). Open-ended domains do not.

To bridge this gap, recent literature has proposed various mechanisms for "self-correction" and "self-play" in language models. These include iterative reasoning bootstrapping (STaR \cite{zelikman2022star}), zero-sum language games (SPIN \cite{chen2024spin}, LSP \cite{hritu2025lsp}), and verbal reinforcement learning (Reflexion \cite{shinn2023reflexion}), but also GANs and AlphaZero.

This paper unifies these approaches under a single rigorous mathematical framework. We define the \textbf{GVU Operator} as the canonical engine of self-improvement. We show that the success or failure of any self-improving agent is determined by the spectral properties of this operator acting on the tangent bundle of the moduli space. Specifically, we derive a "Second Law of AGI Dynamics": 
\emph{Entropy (hallucination) tends to increase unless the combined signal from generation and
verification is strong enough, relative to their noise and to
curvature, to keep the expected capability gain positive. In practice,
many architectures satisfy this by making verification spectrally
``easier'' than generation (e.g., via oracles, ensembles, or
external structure).}

Central message for practitioners is: The Variance Inequality tells you exactly why your RL training plateaus and what to do about it - strengthen the verifier, not the generator. Check out \ref{sec:llm} for relation of our framework to current LLM training pipelines.

\subsection*{Contributions}

This paper makes the following contributions:

\begin{itemize}
    \item \textbf{From static scores to dynamical flows.}
    We extend the moduli-theoretic framework of psychometric batteries \cite{Chojecki2025} from static capability scores to dynamical trajectories. An agent is modeled as a flow $(\nu_r)_{r \ge 0}$ on a statistical parameter manifold $\ThetaMan$, and the self-improvement coefficient $\kappa(r)$ is identified with the Lie derivative of the capability functional $F = \Phi_{\mathcal{B}} \circ \rho_{\mathcal{B}}$ along this flow. This yields an operational notion of \emph{ignition} as sustained $\kappa > 0$ across capability fibers.

    \item \textbf{The GVU operator and a universality theorem.}
    We formalize the \emph{Generator--Verifier--Updater} (GVU) operator $\mathcal{T}_{\mathrm{GVU}} = \mathcal{U} \circ \mathcal{V} \circ \mathcal{G}$ as the canonical engine of self-improvement, and prove a score-based GVU representation theorem: on a regular statistical manifold, any first-order, sample-based update vector field can be written in REINFORCE form
    \[
    v(\theta)
    =
    \EE_{(x,y)\sim \mu \otimes \pi_\theta}
    \big[V_\theta(x,y)\,\nabla_\theta \log \pi_\theta(y\mid x)\big]
    \]
    for some scalar potential $V_\theta$. Thus any rational, data-driven self-update implicitly instantiates a GVU with an internal Verifier potential. A non-trivial verifier is shown to be \emph{necessary} for non-zero expected $\kappa$.

    \item \textbf{The Variance Inequality and the Hallucination Barrier.}
    We derive the \emph{Variance Inequality}, a sufficient spectral condition for expected capability gain $\EE[\Delta F] > 0$. It quantitatively relates alignment $\rho$ between the internal potential and the external score, generation and verification variances $(\sigma_{\mathcal{G}}^2,\sigma_{\mathcal{V}}^2)$, curvature $L$, and step size $\eta$. A corollary identifies the \emph{Hallucination Barrier}: in diagonal regimes where $\mathcal{V} \approx \mathcal{G}$, verification noise matches generation noise and self-correction typically fails to produce sustained $\kappa>0$.

    \item \textbf{Geometric and spectral design levers.}
    Working on the Fisher-information statistical manifold $(\ThetaMan,g)$, we interpret the GVU drift as a noisy vector field whose usefulness is governed by the Fisher angle between the mean update and the true gradient of $F$. We analyze generic design levers that improve $\kappa$---ensemble verifiers, group-based normalization (GRPO-style schemes), oracle-like executors (code, games, proofs), and ``cold'' verifier interfaces in diagonal GVU---and quantify how they increase $\mathrm{SNR}(\mathcal{V})$ and widen the stable stepsize window. We also introduce a Goodhart-type limit on long-run $\kappa$ via decay of the alignment coefficient $\rho$ under proxy optimization.

    \item \textbf{Topological realizations and an empirical $\hat{\kappa}$ protocol.}
    We show that a wide range of existing self-improvement methods---AlphaZero, GANs, STaR, SPIN/LSP, PRMs, RAG self-training, self-debugging code agents, RLHF, Constitutional AI, Self-Instruct, and GRPO---are concrete topological realizations of the GVU operator on different fibers (Sociality, Planning, Embodiment, Recursive, Alignment, Synthetic, Critic-less) of the moduli space. Finally, we propose a finite-difference evaluation protocol for estimating an empirical self-improvement rate $\hat{\kappa}$ from before/after battery scores under a fixed compute budget.
\end{itemize}

\paragraph{Relation to Reinforcement Learning.}
Classical reinforcement learning (RL) provides a formal model for agents that optimize a reward signal in a Markov decision process. Our framework recovers this setting as a special case but is designed to encompass a much broader class of self-improving systems, including LLM-based agents trained purely with supervised or synthetic data rather than explicit RL. In standard RL, an environment and reward function define a single objective $J(\pi_\theta)$, and updates are derived from estimates of the policy gradient $\nabla_\theta J(\pi_\theta)$. In our setting, the \emph{battery} $\mathcal{B}$ and capability functional $\Phi_{\mathcal{B}}$ play the role of a generalized evaluation layer, inducing a scalar capability $F(\theta) = (\Phi_{\mathcal{B}}\circ\rho_{\mathcal{B}})(\theta)$ that can aggregate performance across heterogeneous tasks and modalities. The Generator--Verifier--Updater (GVU) operator then subsumes actor--critic and self-play schemes as special cases: the generator corresponds to sampling trajectories, the verifier to any internal scoring mechanism (reward model, contrastive critic, oracle, or verbal judge), and the updater to a first-order policy update, possibly implemented via supervised fine-tuning rather than explicit RL. Equipping the parameter manifold $\Theta$ with the Fisher information metric turns it into a statistical manifold, allowing us to express these updates as natural-gradient flows and to derive the Variance Inequality, a spectral condition under which \emph{any} such GVU loop---RL-based or not---yields positive expected capability gain. In this way, the theory applies uniformly to conventional RL agents, RLHF/RLAIF pipelines, and purely SFT-trained LLM agents that self-improve via self-correction, synthetic data, or tool use, providing a single geometric lens on their learning dynamics.

\paragraph{Relation to Geometric Deep Learning.}
Our framework is closely related in spirit to Geometric Deep Learning,
which studies neural architectures constrained by underlying geometric
structure (groups, graphs, manifolds). Rather than imposing geometry
on the input domain, we equip the \emph{space of policies and their
learning dynamics} with a statistical and moduli geometry: the parameter
manifold $(\Theta, G)$ with Fisher metric, and the moduli space
$\Mfrak$ of batteries stratified into capability fibers. The GVU
operator then defines a noisy vector field on $(\Theta, G)$, and the
Variance Inequality constrains which such fields can yield positive
drift of the capability functional $F = \Phi_{\mathcal{B}} \circ
\rho_{\mathcal{B}}$. In this sense, our results can be viewed as a
form of ``geometric deep learning of self-improving agents'': on each
fiber (Sociality, Planning, Embodiment, Alignment), different GVU
topologies (adversarial self-play, filtration, execution oracles,
ensemble judges, GRPO) play the role of geometric inductive biases
that make the self-improvement dynamics spectrally stable.

\paragraph{Beyond LLMs and RL.}
Although many of our examples are phrased in terms of language models and RL fine-tuning, the framework is not limited to them. The only ingredients we require are (i) a parametric generator of behaviour---a map $\Pi_\Theta$ from internal state $\theta\in\Theta$ to a stochastic policy $\pi_\theta$ over outputs, (ii) some form of scoring or evaluation, internal or external, and (iii) an update rule that uses these scores to change $\theta$. This pattern occurs across a wide range of systems that are not usually described as RL: evolutionary and black-box optimizers (where $\theta$ parameterizes a search distribution and the Verifier is a fitness function), deep-guided theorem provers and SAT solvers (where proof checkers play the role of high-SNR verifiers), AutoML and architecture search (where $\theta$ controls a proposal policy over models and hyperparameters), and semi-supervised or self-training pipelines in vision and speech. In all of these cases the Generator--Verifier--Updater (GVU) operator provides a canonical decomposition of the self-improvement loop, and the capability functional $F=\Phi_{\mathcal{B}}\circ\rho_{\mathcal{B}}$ allows us to evaluate progress on batteries of heterogeneous tasks rather than a single reward function. The Variance Inequality then applies unchanged: it constrains when noisy, sample-based updates in these non-RL settings can be expected to yield positive drift in capability, even when there is no explicit MDP or reward signal.

\section{Preliminaries: The Geometric Setting}\label{sec:prelim}

We distinguish between the \emph{External Geometry} (the Battery and Moduli Space defined in \cite{Chojecki2025}) and the \emph{Internal Topology} (the Agent's parameters and architecture). The self-improvement process is a mapping from the latter to the former.

\subsection{Semantic Foundations}
Let $\Sigma$ be a finite alphabet of tokens (e.g., the UTF-8 set or a BPE vocabulary).
Let $\Sigma^*$ denote the Kleene closure of $\Sigma$, the set of all finite sequences (strings) over $\Sigma$.
We equip $\Sigma^*$ with the discrete topology. The domains $\Omega_t$ of tasks are subsets of $\Sigma^*$.

\subsection{The External Geometry: Batteries}
We utilize the exact definition of the battery from \cite{Chojecki2025}.

\begin{definition}[Battery]\label{def:battery}
A \emph{battery} is an octuple
\[
  \mathcal B=(T,\ \mathcal F,\ \mathsf S,\ Q^*,\ \mu,\ \mathsf D,\ \Pi,\ \mathsf R),
\]
where:
\begin{itemize}
  \item $T$ is a finite set of tasks; $\mathcal F=\{F_k\}$ is a partition of $T$ into families.
  \item $\mathsf S=\{S_t:\Omega_t\to[0,1]\}_{t\in T}$ are task-specific scoring maps, where $\Omega_t \subseteq \Sigma^*$ is the domain of valid solution traces for task $t$.
  \item $Q^*:T\to[0,1]$ are task thresholds.
  \item $\mu$ is a sampling law on $T\times\Pi\times\mathsf D$ (tasks, seeds, drifts).
  \item $\mathsf D$ (drifts) and $\Pi$ (seeds) are measurable spaces.
  \item $\mathsf R\cong\mathbb{R}^{d_R}$ are resource coordinates (e.g., time, tokens, cost), recorded nonnegatively.
\end{itemize}
\end{definition}

\begin{definition}[Trace and Observables]\label{def:trace}
Given an agent $\mathcal{A}$ and budgetary constraints from $\mathsf{R}$, an evaluation draws i.i.d.\ samples $(t_i,s_i,\delta_i)\sim\mu$ for $i=1,\dots,n$ and produces \emph{traces}
\[
\omega_i \;=\; \mathrm{Run}\!\left(\mathcal{A};\,t_i,\,s_i,\,\delta_i,\,\mathsf{R}\right).
\]
From each trace $\omega_i$, we derive per-task observables:
\begin{itemize}
    \item \textbf{Quality:} $q(t_i) = S_{t_i}(\omega_i) \in [0,1]$.
    \item \textbf{Strict Success:} $z(t_i) = \mathbb{I}\{q(t_i) \ge Q^*(t_i)\}$.
    \item \textbf{Uninterrupted Action Count:} $a(t_i) \in \mathbb{N}$.
    \item \textbf{Plan Depth:} $d(t_i) \in \mathbb{N}$ (length of the longest executed path of prerequisite actions).
    \item \textbf{Incurred Cost:} $c(t_i) \in \mathbb{R}_{\ge 0}$ (derived from the resource component $r$ of $y$).
\end{itemize}
Axis-specific raw statistics $r_x=r_x(\{\omega_i\}_{i=1}^n)$ are computed by fixed functionals and normalized by calibration maps $\phi_x$ to yield axis scores $x\in[0,1]$.
\end{definition}

From the battery, we derive the explicit interaction spaces:

\begin{definition}[Input and Output Spaces]\label{def:io-spaces}
\begin{enumerate}
   \item The \textbf{Input Space} $\mathcal{X}$ is the disjoint union of task-specific prompt domains $P_t \subseteq \Sigma^*$. We identify it with the set of labeled prompts:
    \[
    \mathcal{X} := \bigsqcup_{t \in T} P_t \;\cong\; \bigcup_{t \in T} (P_t \times \{t\}) \;\subseteq\; \Sigma^* \times T.
    \]
    An element $x \in \mathcal{X}$ is a pair $x = (s, t)$, where $s \in P_t$ is the input prompt and $t$ is the task identifier.
    
    \item The \textbf{Output Space} $\mathcal{Y}$ is the product of the semantic trace space and the non-negative resource cone:
    \[
    \mathcal{Y} := \Sigma^* \times \mathbb{R}_{\ge 0}^{d_R}.
    \]
    An element $y \in \mathcal{Y}$ is a pair $y = (\omega, r)$, where $\omega \in \Sigma^*$ is the generated trace (see Definition \ref{def:trace}) and $r$ is the vector of resources consumed to produce it.
\end{enumerate}
\end{definition}

\subsection{The Internal Topology: Parameter Manifold}
Dynamics require a coordinate system. We define the agent's state space to encompass both its static weights (long-term memory) and its dynamic context (working memory).

\begin{definition}[Parameter Manifold $\ThetaMan$]\label{def:manifold}
Let $W \cong \mathbb{R}^d$ be the space of trainable weights (e.g., Transformer parameters).
Let $E$ be the embedding dimension and $L$ the context window size. Let $\mathcal{H} \subset \bigcup_{k=0}^L \mathbb{R}^{k \times E}$ be the space of context states (e.g., the KV-cache or prompt buffer).
The \emph{Parameter Manifold} is the product space:
\[
\ThetaMan := W \times \mathcal{H}.
\]
We equip $\ThetaMan$ with a Riemannian metric $g$ (typically the Fisher Information Metric), allowing the definition of gradients $\nabla_\theta$. A state $\theta = (w, h) \in \ThetaMan$ completely specifies the agent at an instant $r$.
\end{definition}

\begin{remark}[Statistical manifold]\label{rem:statistical-manifold}
When the Riemannian metric $g$ on $\ThetaMan$ is chosen to be the
Fisher information metric induced by the policy family
$\{\pi_\theta\}_{\theta \in \ThetaMan}$, we will refer to
$(\ThetaMan, g)$ as a \emph{statistical manifold} in the sense of
information geometry.
\end{remark}

\subsection{The Architecture Map}
The link between the internal state $\theta$ and the external behavior is the architecture.

\begin{definition}[Policy Space]\label{def:policy-space}
Let $\mathcal{P}(\mathcal{Y})$ denote the space of probability measures on the output space. The space of policies, denoted $\mathcal{P}(\mathcal{Y})^{\mathcal{X}}$, is the set of Markov kernels $K: \mathcal{X} \times \mathcal{B}(\mathcal{Y}) \to [0,1]$.
\end{definition}

\begin{definition}[Architecture $\Pi _{\ThetaMan}$]\label{def:arch}
An \emph{architecture} is a smooth map from the parameter manifold to the policy space:
\[
\Pi _{\ThetaMan}: \ThetaMan \to \mathcal{P}(\mathcal{Y})^{\mathcal{X}}, \quad \theta \mapsto \pi_\theta(dy|x).
\]
This map encapsulates the forward pass of the neural network. Formally, the image $\pi_\theta$ constitutes a Markov kernel from $\mathcal{X}$ to $\mathcal{Y}$, identifying the agent as a stochastic decision rule conditioned on input.
\end{definition}

\subsection{The Bridge: Observable Representations}
The battery $\mathcal{B}$ does not observe $\theta$; it observes scores. We lift the definition of the agent representation from \cite{Chojecki2025} to depend explicitly on $\theta$.

\begin{definition}[Representation Map $\rho_{\mathcal{B}}$]\label{def:rep-map}
Following Definition~2.2 of \cite{Chojecki2025}, let
$X_{\mathcal{B}} := [0,1]^T \times \mathbb{R}_{\ge 0}^{d_R}$ be the
evaluation space. The \emph{representation map}
$\rho_{\mathcal{B}} : \ThetaMan \to \mathcal{P}(X_{\mathcal{B}})$
is the pushforward of the agent's behavior under the battery's scoring
logic.

Let $\mu_X$ be the pushforward of the battery sampling law $\mu$
to the input space $\Xcal$ via the map
$(t,s,\delta) \mapsto x = (s,t)$.
For each $\theta \in \ThetaMan$, define the joint probability law
\[
\mathbb{P}_\theta := \mu_X \otimes \pi_\theta
\quad\text{on}\quad \Xcal \times \Ycal,
\]
where $\pi_\theta$ is the policy induced by the architecture
$\Pi_{\ThetaMan}$.

Define the evaluation map
\[
\mathsf{E} : \Xcal \times \Ycal \to X_{\mathcal{B}},\qquad
\mathsf{E}\big((s,t),(\omega,r)\big)
:= \big((S_{t'}(\omega))_{t' \in T},\, r\big).
\]
The representation of $\theta$ is then the image measure
\[
\rho_{\mathcal{B}}(\theta)
\;:=\;
\mathsf{E}_{\#} \mathbb{P}_\theta
\quad\in \mathcal{P}(X_{\mathcal{B}}).
\]
\end{definition}

\begin{definition}[Capability Functional $\Phi_{\mathcal{B}}$ and Commutative Diagram]\label{def:diagram}
The objective of the self-improvement loop is to maximize the scalar capability score defined in Definition 6.1 of \cite{Chojecki2025}. Let $\Phi_{\mathcal{B}}: \mathcal{P}(X_{\mathcal{B}}) \to \mathbb{R}$ be an AAI functional (e.g., the tractable instance) satisfying axioms (A1)-(A4): normalization, monotonicity with respect
to strict task success, decomposability across task families, and
stability under subsampling. We do not repeat the full statements
here, but throughout this paper we assume that any
$\Phi_{\mathcal{B}}$ satisfies these axioms. We will often write
\[
F(\theta) \;:=\; (\Phi_{\mathcal{B}} \circ \rho_{\mathcal{B}})(\theta)
\]
for the induced scalar objective on the parameter manifold.
\end{definition}

Given a trajectory $(\theta_r)_{r \ge 0}$ on $\ThetaMan$, we write
\[
\nu_r := \rho_{\mathcal{B}}(\theta_r) \in \mathcal{P}(X_{\mathcal{B}})
\]
for the induced flow of representations on the evaluation space.
The corresponding capability curve is
$F(\theta_r) = \Phi_{\mathcal{B}}(\nu_r)$.

The dynamics are governed by the following commutative diagram, which connects the internal physics of the agent (top left) to the moduli space geometry (bottom right):
\begin{center}
\begin{tikzcd}[row sep=large, column sep=large]
\ThetaMan \arrow[r, "\Pi _{\ThetaMan}"] \arrow[d, "\text{Dynamics } \mathcal{T}_{GVU}"'] & \Pcal(\Ycal)^\Xcal \arrow[r, "\otimes \mu"] & \Pcal(\Xcal \times \Ycal) \arrow[d, "\mathsf{E}_{\#}"] \\
\ThetaMan \arrow[rr, dashed, "\rho_{\mathcal{B}}"] & & \Pcal(X_{\mathcal{B}}) \arrow[r, "\Phi_{\mathcal{B}}"] & \RR
\end{tikzcd}
\end{center}

The self-improvement dynamics will be expressed in terms of the
composite map $F = \Phi_{\mathcal{B}} \circ \rho_{\mathcal{B}}$:
the GVU operator acts on the internal state $\theta$, the battery
$\mathcal{B}$ compresses the resulting behavior into a representation
$\rho_{\mathcal{B}}(\theta)$, and $F(\theta)$ is the scalar capability
that we differentiate along the induced flow.

\section{The Generator-Verifier-Updater (GVU) Operator}\label{sec:gvu}

We postulate that any rational mechanism for autonomous $\kappa > 0$ can be decomposed into a canonical operator $\mathcal{T}_{\text{GVU}}$. This operator describes one step of the recursive loop (e.g., one round of self-play or one reasoning step).

\begin{definition}[External score and internal potential]\label{def:score-potential}
Fix a battery
\[
\mathcal{B}
= (T,\mathcal{F},\mathsf{S},Q^*,\mu,\mathsf{D},\Pi,\mathsf{R})
\]
with input and output spaces $\Xcal,\Ycal$ as in
Definition~\ref{def:io-spaces}. For $x = (s,t) \in \Xcal$ and
$y = (\omega,r) \in \Ycal$, the \emph{external score map}
\[
S_{\mathcal{B}} : \Xcal \times \Ycal \to [0,1]
\]
is defined by
\[
S_{\mathcal{B}}(x,y)
= S_{\mathcal{B}}\big((s,t),(\omega,r)\big)
:= S_t(\omega),
\]
i.e.\ it applies the task-specific scoring map $S_t$ of
Definition~\ref{def:trace} to the semantic trace $\omega$ and ignores
resources $r$.

An \emph{internal potential} for $\mathcal{B}$ is any measurable
function
\[
V : \Xcal \times \Ycal \to \RR
\]
that the agent uses to internally score interactions $(x,y)$, typically
as a surrogate for the external score $S_{\mathcal{B}}(x,y)$. In later
sections we measure the alignment between $V$ and $S_{\mathcal{B}}$
through the coefficient $\rho$ appearing in the update decomposition
\eqref{eq:update-decomp}.
\end{definition}

\begin{definition}[The GVU Operator]\label{def:operator-T}
Fix a battery $\mathcal{B}$ as above, together with:
\begin{itemize}
    \item a batch size $N \in \NN$;
    \item an architecture $\Pi _{\ThetaMan} : \ThetaMan \to \Pcal(\Ycal)^{\Xcal}$,
    sending $\theta \mapsto \pi_\theta$;
    \item an internal potential $V : \Xcal \times \Ycal \to \RR$ in
    the sense of Definition~\ref{def:score-potential};
    \item an inverse temperature $\beta \ge 0$;
    \item a regularizer $\mathcal{R} : \ThetaMan \times \ThetaMan
    \to \RR_{\ge 0}$ and coefficient $\lambda \ge 0$.
\end{itemize}
Let
\[
\mathcal{B}_N := (\Xcal \times \Ycal)^N
\]
denote the batch space of $N$ input--output pairs, and let
\[
\mathcal{E}_N(\Xcal \times \Ycal)
:= \left\{
   \sum_{i=1}^N w_i \delta_{(x_i,y_i)}
   \,\middle|\,
   (x_i,y_i) \in \Xcal \times \Ycal,\;
   w_i \ge 0,\;\sum_{i=1}^N w_i = 1
\right\}
\subset \Pcal(\Xcal \times \Ycal)
\]
denote the space of $N$-point empirical measures.

The GVU operator is a one-step update map
\[
\mathcal{T}_{\mathrm{GVU}} : \ThetaMan \to \ThetaMan,\qquad
\theta \mapsto \mathcal{U}\big(\theta,\,
                               \mathcal{V}(\mathcal{G}(\theta))\big),
\]
defined via three constituent maps
\[
\mathcal{G} : \ThetaMan \to \mathcal{B}_N,
\qquad
\mathcal{V} : \mathcal{B}_N \to \mathcal{E}_N(\Xcal \times \Ycal),
\qquad
\mathcal{U} : \ThetaMan \times \mathcal{E}_N(\Xcal \times \Ycal)
\to \ThetaMan
\]
as follows.

\paragraph{1. The Generator ($\mathcal{G}$).}
For a given parameter $\theta \in \ThetaMan$, write
$\Pi _{\ThetaMan}(\theta) = \pi_\theta \in \Pcal(\Ycal)^{\Xcal}$ for the induced
policy. To construct a batch $\{(x_i,y_i)\}_{i=1}^N$:
\begin{enumerate}
    \item sample evaluation triples
    $(t_i,s_i,\delta_i) \sim \mu$ independently from the battery's
    sampling law on $T \times \Pi \times \mathsf{D}$;
    \item form the corresponding labeled prompts
    $x_i \in \Xcal$ as in Definition~\ref{def:io-spaces};
    \item sample outputs $y_i \sim \pi_\theta(\cdot \mid x_i)$ in
    $\Ycal$.
\end{enumerate}
This defines a (stochastic) map
\[
\mathcal{G} : \ThetaMan \to \mathcal{B}_N,
\qquad
\theta \mapsto \{(x_i,y_i)\}_{i=1}^N.
\]
In the infinite-batch limit the empirical law of
$\mathcal{G}(\theta)$ converges to the joint measure
$\mu \otimes \pi_\theta$ that underlies the representation
$\rho_{\mathcal{B}}(\theta)$.

\paragraph{2. The Verifier ($\mathcal{V}$).}
Given a batch
$\{(x_i,y_i)\}_{i=1}^N \in \mathcal{B}_N$, the Verifier uses the
internal potential $V$ and inverse temperature $\beta$ to produce a
weighted empirical measure $\hat{\mu}_V \in \mathcal{E}_N(\Xcal \times
\Ycal)$:
\[
\mathcal{V} :
\mathcal{B}_N \to \mathcal{E}_N(\Xcal \times \Ycal),
\qquad
\{(x_i,y_i)\}_{i=1}^N
\mapsto
\hat{\mu}_V := \sum_{i=1}^N w_i \,\delta_{(x_i,y_i)},
\]
with weights
\[
w_i
= \frac{\exp(\beta V(x_i,y_i))}
       {\sum_{j=1}^N \exp(\beta V(x_j,y_j))}.
\]
By construction, $V$ is evaluated on the same $\Xcal,\Ycal$ that arise
from the battery $\mathcal{B}$; in particular, its alignment with the
external score map $S_{\mathcal{B}}$ is what determines the coefficient
$\rho$ in the update decomposition \eqref{eq:update-decomp}. Concrete
choices of $V$ include:
\begin{itemize}
    \item \emph{discriminative} potentials (reward models, opponents)
    \cite{chen2024spin};
    \item \emph{logical} potentials derived from unit tests, compilers,
    or theorem provers \cite{dong2024autoif};
    \item \emph{heuristic} potentials produced by verbal critics
    \cite{shinn2023reflexion}.
\end{itemize}

\paragraph{3. The Updater ($\mathcal{U}$).}
The updater maps a weighted empirical measure back to parameters. Given
the current parameter $\theta \in \ThetaMan$ and a measure
$\hat{\mu}_V \in \mathcal{E}_N(\Xcal \times \Ycal)$, we define
\[
\mathcal{U}(\theta,\hat{\mu}_V)
:= \arg\min_{\theta' \in \ThetaMan}
   \EE_{(x,y) \sim \hat{\mu}_V}
   \Big[ - \log \pi_{\theta'}(y \mid x)
         + \lambda \mathcal{R}(\theta',\theta) \Big],
\]
whenever a minimizer exists. Here the expectation with respect to
$\hat{\mu}_V$ reduces to a finite weighted sum because
$\hat{\mu}_V \in \mathcal{E}_N(\Xcal \times \Ycal)$. For fixed
$\theta$ we can view this as a map
\[
\mathcal{U}_\theta :
\mathcal{E}_N(\Xcal \times \Ycal) \to \ThetaMan,
\qquad
\hat{\mu}_V \mapsto \mathcal{U}(\theta,\hat{\mu}_V),
\]
and the one-step GVU update is
\[
\theta_{t+1}
= \mathcal{T}_{\mathrm{GVU}}(\theta_t)
= \mathcal{U}\big(\theta_t,\,
                  \mathcal{V}(\mathcal{G}(\theta_t))\big).
\]
This template encompasses stochastic gradient descent on
log-likelihood (when $\hat{\mu}_V$ is unweighted), PPO-style updates
(when $V$ encodes advantages), and in-context updates when $\ThetaMan$
includes a context or memory component.
\end{definition}

\subsection{Monolithic self-improvement: the diagonal regime $G = V = U$}

An instructive extreme case of our GVU abstraction is when the
generator, verifier, and updater are instantiated by the same model.
Let $M_\theta$ be a single LLM with parameters $\theta$. We define
three role-specific interfaces:
\begin{itemize}
    \item $\mathcal{G}_\theta$ queries $M_\theta$ with a "solve this
    task" prompt to sample traces $\tau$;
    \item $\mathcal{V}_\theta$ queries the same $M_\theta$ with a
    "critique and score these traces" prompt to assign scalar
    scores or rankings;
    \item $\mathcal{U}_\theta$ queries $M_\theta$ with a "given these
    scored traces, propose an update" prompt, producing new training
    examples, hyperparameters, or code that an outer optimizer
    converts into a parameter update.
\end{itemize}
At the level of our abstraction the update still factors as
\[
\theta_{t+1}
= \mathcal{U}_\theta \circ \mathcal{V}_\theta \circ \mathcal{G}_\theta(\theta_t),
\]
but now we are in a \emph{diagonal} regime with (this is a slight abuse of notation)
$\mathcal{G}_\theta = \mathcal{V}_\theta = \mathcal{U}_\theta = M_\theta$,
i.e.\ the same model, with the same weights, plays all three roles via
different prompts or heads. The battery $\mathcal{B}$ and
representation map $\rho_{\mathcal{B}}$ only see the induced flow
$(\nu_r)_{r \ge 0}$ and the associated $\kappa$-curve, but the noise
and bias structure of the estimator is qualitatively different:
verification and update are no longer external signals but reflections
of the model's own capabilities and failure modes. In particular, the
self-improvement coefficient $\kappa(r)$ distinguishes between regimes
of \emph{self-confirmation} (where the model merely reinforces its own
preferences) and \emph{genuine self-correction}.

\subsection{Ensemble GVU: LLM councils}

Consider now a ``council of models'' setup. A user query $x$ is
first broadcast to a fixed set of base models
$M^{(1)},\dots,M^{(K)}$ (e.g.\ GPT-5.1, Gemini 3, Claude, Grok),
each of which returns a candidate answer $y^{(k)}$. In a second
stage, all models see the anonymized pool
$\{y^{(1)},\dots,y^{(K)}\}$ and produce evaluations or rankings of the
candidate answers. Finally, a distinguished ``Chairman LLM'' receives
the answers together with the council's evaluations and produces the
final response.

To view this through our lens we treat the entire council---base
models plus chairman and aggregation scheme---as a single meta-agent
with parameter space
\[
\Theta_{\mathrm{council}}
= \Theta_1 \times \cdots \times \Theta_K \times \Theta_{\mathrm{chair}}.
\]
The generator $\mathcal{G}_{\mathrm{council}}$ maps a query $x$ and
current parameters to a trace consisting of all candidate answers
(and optionally the chairman's answer). The verifier
$\mathcal{V}_{\mathrm{council}}$ maps this trace to scores by letting
each model judge the anonymized pool and aggregating their votes into
rankings or pairwise preferences. The updater
$\mathcal{U}_{\mathrm{council}}$ then uses these internally generated
judgements to update one or more components of the council: for
example, distilling a student model on the council's chosen best
answers, or applying preference optimization where ``winning''
answers are treated as preferred over ``losing'' answers.

In the limiting case the same council architecture is used for
generation, verification, and the proposal of updates, yielding a
multi-agent analogue of the diagonal regime $G = V = U$. From the
perspective of our Variance Inequality this ensemble structure has
two important spectral effects:
\begin{enumerate}
    \item diversity of base models improves exploration in the
    generator, potentially increasing $\mathrm{SNR}(\mathcal{G})$; and
    \item aggregating multiple judges reduces the variance of the
    verifier's signal, improving $\mathrm{SNR}(\mathcal{V})$.
\end{enumerate}
Both effects tend to increase the local $\kappa$-slope, making
ignition more likely compared to a single-model GVU with the same
underlying architecture.

\begin{remark}[Diagonal vs.\ ensemble GVU and the AGI criterion]
Our framework separates the \emph{roles} of generation, verification,
and update from the \emph{implementations} that realize them. In the
diagonal regime we have
\[
\mathcal{G}_\theta = \mathcal{V}_\theta = \mathcal{U}_\theta = M_\theta,
\]
i.e.\ a single monolithic model plays all three roles via different
interfaces or prompts. In the ensemble regime, by contrast,
$\mathcal{G}$, $\mathcal{V}$, and $\mathcal{U}$ are implemented by a
council of models and an aggregation scheme, as in our LLM council
example: multiple base models jointly generate traces, jointly
evaluate them, and jointly shape the update.

From the perspective of the induced $\kappa$-flow the two regimes
differ primarily in their noise and bias structure. Diagonal GVU is
maximally entangled: the same set of parameters determines what
solutions are proposed, how they are judged, and what updates are
considered admissible. This makes the system particularly vulnerable
to self-confirmation: the verifier inherits the generator's blind
spots, and the updater may systematically reinforce them, potentially
driving the flow toward a suboptimal attractor with
$\kappa(r) \approx 0$. Ensemble GVU, on the other hand, can improve
both exploration and signal quality: diversity across council members
increases the support of the generator's trace distribution, and
aggregating multiple judges can reduce the variance of the verifier's
signal, increasing $\mathrm{SNR}(\mathcal{G})$ and
$\mathrm{SNR}(\mathcal{V})$ in our Variance Inequality.

However, neither diagonal nor ensemble GVU is by itself sufficient
for ``AGI-like'' self-improvement in our sense. Both regimes must be
evaluated through a battery $\mathcal{B}$ and its induced moduli space
of capability fibers. An AGI candidate must exhibit $\hat{\kappa} > 0$
not only in a single, well-instrumented fiber (such as competition
mathematics), but across social, planning, embodied, and recursive
fibers as well. In this view, diagonal and ensemble GVU are two
different ways of wiring up the same underlying self-improvement
template; the AGI criterion concerns the global shape of the resulting
$\kappa$-curve over the moduli space, not the particular choice of
wiring.
\end{remark}

\subsection{Universality of the GVU decomposition}\label{sec:gvu-universality}

We postulated above that any rational mechanism for autonomous
self-improvement fits the GVU template. In this subsection we show
that, under mild regularity assumptions, any \emph{first-order
statistical} update rule can indeed be written in a REINFORCE-style
GVU form. Thus, whenever a flow on $\ThetaMan$ achieves $\kappa>0$
using only samples from the current policy, there exists an implicit
internal potential playing the role of a Verifier.

\begin{definition}[Score and Fisher information]\label{def:fisher}
Fix a battery $\mathcal{B}$ and an architecture
$\Pi : \ThetaMan \to \Pcal(\Ycal)^{\Xcal}$, $\theta \mapsto \pi_\theta$.
For $\theta \in \ThetaMan$ and $(x,y) \in \Xcal \times \Ycal$, define
the \emph{score function}
\[
s_\theta(x,y) := \nabla_\theta \log \pi_\theta(y \mid x)
\in T_\theta \ThetaMan.
\]
The \emph{Fisher information matrix} at $\theta$ is
\[
G(\theta)
:= \EE_{(x,y) \sim \mu \otimes \pi_\theta}
   \big[ s_\theta(x,y) s_\theta(x,y)^\top \big].
\]
We say the statistical manifold $(\ThetaMan, G)$ is \emph{regular} if $G(\theta)$ is
finite and positive definite for all $\theta$ in the region of
interest.
\end{definition}

\begin{definition}[First-order statistical update]\label{def:first-order-update}
A vector field $v : \ThetaMan \to T\ThetaMan$ is called a
\emph{first-order statistical update} if it depends on $\theta$ only
through the joint law $\mu \otimes \pi_\theta$, in the sense that there
exists a measurable function
\[
\Psi : \ThetaMan \times \Xcal \times \Ycal \to T\ThetaMan
\]
with
\[
v(\theta)
= \EE_{(x,y) \sim \mu \otimes \pi_\theta}
    \big[\Psi(\theta,x,y)\big].
\]
Intuitively, $v(\theta)$ is computed from first-order statistics of
samples $(x,y)$ drawn from the current policy on the battery.
\end{definition}

The next theorem shows that, in a regular statistical manifold, any
such update can be written as a REINFORCE-style policy-gradient update
for an appropriate (possibly implicit) scalar potential.

\begin{theorem}[Score-based GVU representation]\label{thm:gvu-score-representation}
Assume the regularity conditions of Definition~\ref{def:fisher}, so
that $G(\theta)$ is positive definite for all $\theta$ in a region of
interest. Let $v : \ThetaMan \to T\ThetaMan$ be a smooth vector field,
e.g.\ the velocity $v(\theta_r) = \dot{\theta}_r$ of an autonomous flow
$\gamma : r \mapsto \theta_r$ on $\ThetaMan$. Then for each
$\theta \in \ThetaMan$ there exists a scalar \emph{internal potential}
\[
V_\theta : \Xcal \times \Ycal \to \RR
\]
such that
\begin{equation}\label{eq:score-representation}
v(\theta)
=
\EE_{(x,y) \sim \mu \otimes \pi_\theta}
\big[ V_\theta(x,y)\, s_\theta(x,y) \big],
\end{equation}
where $s_\theta$ is the score function from
Definition~\ref{def:fisher}. In particular, $v(\theta)$ can be
realized as the expected REINFORCE update for the scalar potential
$V_\theta$.
\end{theorem}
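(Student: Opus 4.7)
The plan is a one-line ansatz. I postulate a linear potential
\[
V_\theta(x,y) := \langle a(\theta),\, s_\theta(x,y)\rangle
\]
for an unknown coefficient vector $a(\theta) \in T_\theta\ThetaMan$, where the pairing is the coordinate inner product, and substitute into the right-hand side of \eqref{eq:score-representation}. This yields
\[
\EE_{(x,y)\sim \mu\otimes\pi_\theta}\big[V_\theta(x,y)\,s_\theta(x,y)\big]
\;=\;
\EE\big[s_\theta(x,y)\, s_\theta(x,y)^\top\big]\, a(\theta)
\;=\;
G(\theta)\, a(\theta),
\]
where the second equality is just the definition of the Fisher information matrix from Definition~\ref{def:fisher}. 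So the theorem reduces to solving a single linear equation $G(\theta)\, a(\theta) = v(\theta)$ on $T_\theta\ThetaMan$.

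Step two is the inversion. By the regularity hypothesis $G(\theta)$ is positive definite and hence invertible, so I set $a(\theta) := G(\theta)^{-1} v(\theta)$ --- the \emph{natural-gradient preimage} of $v(\theta)$ --- and define
\[
V_\theta(x,y) := \langle G(\theta)^{-1} v(\theta),\; s_\theta(x,y)\rangle.
\]
Direct substitution verifies \eqref{eq:score-representation}. Measurability of $V_\theta$ in $(x,y)$ is inherited from the score function; smoothness in $\theta$ follows from smoothness of $v$ together with smoothness of matrix inversion on the open set where $G$ is positive definite; and integrability of $V_\theta(x,y)\, s_\theta(x,y)$ is dominated by $\|G(\theta)^{-1} v(\theta)\|\cdot \|s_\theta(x,y)\|^2$, which has finite expectation precisely because $G(\theta)$ is finite --- the second half of the regularity assumption.

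Conceptually, no step here is hard: the theorem is Fisher inversion in disguise, and the main content is recognizing that a REINFORCE-type estimator with a well-chosen linear potential already spans all smooth vector fields on a regular statistical manifold. The only genuine subtlety worth flagging is non-uniqueness --- one can add any $\phi_\theta$ with $\EE[\phi_\theta(x,y)\,s_\theta(x,y)] = 0$ to $V_\theta$, in particular any baseline $b_\theta(x)$ depending only on the prompt, since $\EE_{y\sim \pi_\theta(\cdot\mid x)}[s_\theta(x,y)] = 0$. This control-variate freedom is exactly what variance-reduction schemes like GRPO or advantage normalization exploit, and it is worth mentioning after the proof, but it plays no role in the existence argument. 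If one wished to extend the statement to a context-augmented $\ThetaMan$ where $T_\theta\ThetaMan$ is infinite-dimensional, the only modification would be to require $G(\theta)$ to be a boundedly invertible operator on $T_\theta\ThetaMan$ rather than a positive-definite matrix; the argument is otherwise identical.
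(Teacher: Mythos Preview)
Your proof is correct and follows essentially the same route as the paper: postulate the linear ansatz $V_\theta(x,y) = \langle a(\theta), s_\theta(x,y)\rangle$, reduce \eqref{eq:score-representation} to $G(\theta)\,a(\theta) = v(\theta)$, and invert using positive definiteness of $G(\theta)$. Your additional remarks on integrability, smoothness in $\theta$, and the control-variate non-uniqueness (adding any $\phi_\theta$ with $\EE[\phi_\theta\,s_\theta]=0$, in particular prompt-only baselines) go slightly beyond what the paper records in the proof itself and are worth keeping.
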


\begin{proof}
Fix $\theta \in \ThetaMan$, and write $s(x,y) := s_\theta(x,y)$ and
$G := G(\theta)$ for brevity. By Definition~\ref{def:fisher},
\[
G = \EE[s(x,y) s(x,y)^\top]
\]
is symmetric positive definite, hence invertible. Define
\[
a(\theta) := G^{-1} v(\theta) \in T_\theta \ThetaMan,
\]
and the scalar function
\[
V_\theta(x,y)
:= \big\langle a(\theta),\, s_\theta(x,y) \big\rangle.
\]
Then
\[
\EE\big[ V_\theta(x,y) s_\theta(x,y) \big]
=
\EE\big[ \langle a(\theta), s(x,y) \rangle\, s(x,y) \big]
=
\EE\big[ s(x,y)s(x,y)^\top \big] a(\theta)
= G a(\theta)
= v(\theta),
\]
which is exactly \eqref{eq:score-representation}.
\end{proof}

\begin{remark}
The construction of $V_\theta$ in the proof is explicit:
\[
V_\theta(x,y)
= \big\langle G(\theta)^{-1} v(\theta),
          \nabla_\theta \log \pi_\theta(y \mid x) \big\rangle.
\]
Thus $V_\theta$ can be interpreted as the unique scalar potential whose
REINFORCE update reproduces the given first-order vector field
$v(\theta)$ in the Fisher geometry.
\end{remark}

Equation~\eqref{eq:score-representation} has a direct GVU
interpretation. The generator $\mathcal{G}$ samples $(x,y)$ from
$\mu \otimes \pi_\theta$; the verifier assigns a scalar weight
$V_\theta(x,y)$ to each trace; and the updater projects this weighted
signal back onto the parameter manifold via the policy gradient
$\nabla_\theta \log \pi_\theta(y \mid x)$.

\begin{corollary}[Necessity of a non-trivial verifier]\label{cor:verifier-necessity}
In the setting of Theorem~\ref{thm:gvu-score-representation}, consider
a REINFORCE-style update of the form
\[
\dot{\theta}
= \EE_{(x,y) \sim \mu \otimes \pi_\theta}
\big[ V_\theta(x,y)\, s_\theta(x,y) \big].
\]
If $V_\theta(x,y)$ is almost surely constant (i.e.\ independent of
$(x,y)$), then
\[
\EE\big[ V_\theta(x,y)\, s_\theta(x,y) \big] = 0,
\]
and hence the expected update vanishes: $\EE[\dot{\theta}] = 0$.
Consequently, the expected self-improvement coefficient
$\EE[\kappa(r)]$ is zero: there is no systematic gain in the battery
score $F = \Phi_{\mathcal{B}} \circ \rho_{\mathcal{B}}$.
\end{corollary}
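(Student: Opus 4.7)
The plan is to derive the corollary directly from the log-derivative (score function) identity, and then transport the vanishing of the expected update through the definition of $\kappa$ as a Lie derivative. The content is essentially the baseline lemma behind REINFORCE: a constant baseline integrates to zero against the score, so a constant verifier cannot drive any expected motion on $\ThetaMan$.

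First I would record the standard identity
\[
\EE_{y \sim \pi_\theta(\cdot \mid x)}\big[s_\theta(x,y)\big]
= \EE_{y \sim \pi_\theta(\cdot \mid x)}\big[\nabla_\theta \log \pi_\theta(y \mid x)\big]
= 0,
\]
valid for each fixed $x \in \Xcal$. This follows by differentiating the normalization $\int \pi_\theta(y\mid x)\,dy = 1$ under the integral sign and using $\nabla_\theta \pi_\theta = \pi_\theta \nabla_\theta \log \pi_\theta$; the regularity hypothesis of Definition~\ref{def:fisher} (finite Fisher information with interchangeable differentiation and integration, as already invoked in Theorem~\ref{thm:gvu-score-representation}) is precisely what legitimizes this step. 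Integrating in $x$ against $\mu_X$ then yields $\EE_{(x,y)\sim \mu \otimes \pi_\theta}[s_\theta(x,y)]=0$.

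Next I would set $V_\theta(x,y) \equiv c$ almost surely and factor the constant out of the expectation, obtaining
\[
\EE_{(x,y)\sim \mu \otimes \pi_\theta}\big[V_\theta(x,y)\,s_\theta(x,y)\big]
= c \cdot \EE\big[s_\theta(x,y)\big] = 0,
\]
so the expected velocity $\EE[\dot{\theta}]$ of the induced flow vanishes at every $\theta$ in the regular region.

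Finally I would translate this into a statement about $\kappa$. By the commutative diagram of Section~\ref{sec:prelim} and the definition of $\kappa$ as the Lie derivative of $F = \Phi_{\mathcal{B}} \circ \rho_{\mathcal{B}}$ along the GVU flow, we have $\kappa(r) = \langle \nabla F(\theta_r),\,\dot{\theta}_r\rangle$ at smooth points. Taking expectation, using bilinearity of the inner product together with the fact that the randomness enters only through $\dot{\theta}_r$ (conditionally on $\theta_r$), gives $\EE[\kappa(r)\mid \theta_r] = \langle \nabla F(\theta_r),\,\EE[\dot{\theta}_r\mid \theta_r]\rangle = 0$, so $\EE[\kappa(r)]=0$ by the tower property.

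The only mildly delicate step is the conditional factoring in the last paragraph: one must check that $\nabla F(\theta_r)$ is measurable with respect to $\theta_r$ and that the expected velocity identity holds pointwise on $\ThetaMan$, not merely in integrated form. Both are immediate under the smoothness assumption on $F$ and $\Pi_{\ThetaMan}$ already in force, so no new regularity is introduced. The rest is purely the score identity, which is why this corollary should be stated as an essentially one-line consequence of Theorem~\ref{thm:gvu-score-representation} rather than a standalone result.
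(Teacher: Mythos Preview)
Your proof is correct and follows essentially the same route as the paper: factor the constant potential out of the expectation, invoke the score identity $\EE[s_\theta]=0$, and then read off $\EE[\kappa(r)]=0$ from $\kappa$ being the directional derivative of $F$ along $\dot\theta$. If anything, you are more careful than the paper about the last step, making the conditional-expectation/tower argument explicit where the paper simply says ``up to higher-order curvature terms, the directional derivative of $F$ along $\dot\theta_r$ has zero mean.''
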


\begin{proof}
If $V_\theta(x,y) \equiv c(\theta)$ is constant in $(x,y)$, then
\[
\EE\big[ V_\theta(x,y)\, s_\theta(x,y) \big]
= c(\theta)\, \EE[s_\theta(x,y)].
\]
It is a standard property of the score function that
$\EE[s_\theta(x,y)] = 0$ under $\mu \otimes \pi_\theta$, so the
expectation vanishes. The statement about $\EE[\kappa(r)]$ follows
because $\kappa(r)$ is, up to higher-order curvature terms, the
directional derivative of $F$ along $\dot{\theta}_r$, and the
direction itself has zero mean.
\end{proof}

Taken together, Theorem~\ref{thm:gvu-score-representation} and
Corollary~\ref{cor:verifier-necessity} justify the GVU template for
first-order, data-driven self-improvement: any such update can be
viewed as a generator sampling from the current policy, a verifier
computing a scalar potential $V_\theta(x,y)$ on traces, and an updater
implementing the corresponding policy-gradient step. Moreover, a
non-trivial verifier (one for which $V_\theta$ is not almost surely
constant) is \emph{necessary} for non-zero expected $\kappa$.


\section{Spectral Stability: The Variance Inequality}\label{sec:stability}

We now derive our main theoretical condition linking the GVU update to
expected changes in capability. The GVU operator induces a stochastic
update vector $\hat{g}$ on $\ThetaMan$. For the agent to improve, this
vector must on average align with the gradient of the true battery
score $F = \Phi_{\mathcal{B}} \circ \rho_{\mathcal{B}}$. However, the
agent only has access to its internal potential $V$, not directly to
the battery's scoring logic $\mathsf{S}$.

\subsection{Decomposition of the Update Vector}

Let $F := \Phi_{\mathcal{B}} \circ \rho_{\mathcal{B}}$ and
$g^* := \nabla_\theta F(\theta)$ be the true gradient of the battery
score with respect to parameters. The update $\hat{g}$ produced by
GVU is a stochastic estimator of $g^*$. We decompose it as
\begin{equation}\label{eq:update-decomp}
\hat{g} = \rho \cdot g^* + \xi_{\mathcal{G}} + \xi_{\mathcal{V}} + b_{\text{bias}},
\end{equation}
where:
\begin{itemize}
    \item $\rho \in [-1, 1]$ is an \emph{alignment coefficient}
    measuring the correlation between the internal potential $V$ and
    the external score induced by $\mathsf{S}$;
    \item $\xi_{\mathcal{G}}$ is the \emph{Generation Noise} (variance
    due to exploration over tasks and samples $y \sim \pi_\theta$);
    \item $\xi_{\mathcal{V}}$ is the \emph{Verification Noise}
    (variance due to errors in $V$ as an estimator of the true score);
    \item $b_{\text{bias}}$ is a systematic misalignment term.
\end{itemize}
We assume throughout this section that $\EE[\xi_{\mathcal{G}}] =
\EE[\xi_{\mathcal{V}}] = 0$ and that
$\xi_{\mathcal{G}},\xi_{\mathcal{V}}$ are uncorrelated with $g^*$.
We write
\[
\sigma_{\mathcal{G}}^2 := \EE\|\xi_{\mathcal{G}}\|^2, \qquad
\sigma_{\mathcal{V}}^2 := \EE\|\xi_{\mathcal{V}}\|^2,
\]
and define the corresponding signal-to-noise ratios
\[
\mathrm{SNR}(\mathcal{G}) := \frac{\|g^*\|^2}{\sigma_{\mathcal{G}}^2},
\qquad
\mathrm{SNR}(\mathcal{V}) := \frac{\|g^*\|^2}{\sigma_{\mathcal{V}}^2}.
\]

\subsection{The Variance Inequality}

We study a single small step of size $\eta>0$:
$\theta_{t+1} = \theta_t + \eta \hat{g}$. For brevity we write
$\theta := \theta_t$.

\begin{theorem}[Variance Inequality, sufficient condition]\label{thm:variance}
Assume $F = \Phi_{\mathcal{B}} \circ \rho_{\mathcal{B}}$ is twice
differentiable and $L$-smooth in a neighborhood of $\theta$, i.e.,
its Hessian satisfies $\|H(\theta')\| \le L$ for all $\theta'$ along
the trajectory. Assume the decomposition \eqref{eq:update-decomp}
holds with $b_{\text{bias}}$ negligible compared to $g^*$ and
$\xi_{\mathcal{G}},\xi_{\mathcal{V}}$ zero-mean and uncorrelated with
$g^*$ and uncorrelated with each other. Then for step size $\eta > 0$ small enough that the second-order
expansion is accurate, a sufficient condition for expected improvement
$\EE[\Delta F] := \EE[F(\theta_{t+1}) - F(\theta_t)] > 0$ is
\begin{equation}\label{eq:variance-inequality}
\rho \|g^*\|^2
\;>\;
\frac{\eta L}{2} \big(
   \rho^2 \|g^*\|^2 + \sigma_{\mathcal{G}}^2 + \sigma_{\mathcal{V}}^2
\big).
\end{equation}
Equivalently, dividing by $\|g^*\|^2$
\begin{equation}\label{eq:snr-form}
\rho
\;>\;
\frac{\eta L}{2}
\left(
  \rho^2
  + \frac{1}{\mathrm{SNR}(\mathcal{G})}
  + \frac{1}{\mathrm{SNR}(\mathcal{V})}
\right).
\end{equation}
In particular, for fixed alignment $\rho$ and curvature/stepsize pair $(L,\eta)$, this
is a joint constraint on the generator and verifier noise: both
$\mathrm{SNR}(\mathcal{G})$ and $\mathrm{SNR}(\mathcal{V})$ must be
sufficiently large (i.e.\ the corresponding variances sufficiently
small) for $\EE[\Delta F] > 0$ to hold. In particular, given
$\mathrm{SNR}(\mathcal{G})$ there is a minimum required
$\mathrm{SNR}(\mathcal{V})$, and conversely; extremely noisy
generation or verification cannot be compensated by the other.
\end{theorem}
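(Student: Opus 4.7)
The plan is to apply the standard $L$-smoothness quadratic bound to $F$ in the direction $\eta\hat g$, take expectation over the GVU randomness, and then reduce the problem to bookkeeping of the first and second moments of $\hat g$ under the decomposition \eqref{eq:update-decomp}. Since $\|H(\theta')\|\le L$ along the trajectory, a second-order Taylor expansion yields the ascent-lemma inequality $F(\theta+\eta\hat g)\ge F(\theta)+\eta\langle g^*,\hat g\rangle-\tfrac{L\eta^2}{2}\|\hat g\|^2$, which is the only analytic input we need; everything else is linear algebra on the noise decomposition.

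Next I would take expectations on both sides. For the linear term, I would expand $\langle g^*,\hat g\rangle=\rho\|g^*\|^2+\langle g^*,\xi_{\mathcal G}\rangle+\langle g^*,\xi_{\mathcal V}\rangle+\langle g^*,b_{\mathrm{bias}}\rangle$; the middle two summands vanish in expectation by the zero-mean and uncorrelated-with-$g^*$ hypothesis, and the bias term is dropped under the stated negligibility assumption. This leaves $\mathbb{E}\langle g^*,\hat g\rangle\approx\rho\|g^*\|^2$. For the quadratic term, I would expand $\|\hat g\|^2$ into six pairwise products; the three cross terms $\langle \rho g^*,\xi_{\mathcal G}\rangle$, $\langle \rho g^*,\xi_{\mathcal V}\rangle$, $\langle \xi_{\mathcal G},\xi_{\mathcal V}\rangle$ all have zero mean (the first two by uncorrelatedness of the noises with $g^*$, the last by mutual uncorrelatedness), so $\mathbb{E}\|\hat g\|^2\approx\rho^2\|g^*\|^2+\sigma_{\mathcal G}^2+\sigma_{\mathcal V}^2$.

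Combining the two expectation computations gives
\[
\mathbb{E}[\Delta F]\;\ge\;\eta\,\rho\|g^*\|^2-\tfrac{L\eta^2}{2}\bigl(\rho^2\|g^*\|^2+\sigma_{\mathcal G}^2+\sigma_{\mathcal V}^2\bigr),
\]
and requiring the right-hand side to be strictly positive, followed by dividing through by $\eta>0$, is exactly the inequality \eqref{eq:variance-inequality}. The SNR form \eqref{eq:snr-form} then follows by dividing by $\|g^*\|^2$ and substituting the definitions of $\mathrm{SNR}(\mathcal{G})$ and $\mathrm{SNR}(\mathcal{V})$. The final sentences about joint constraint and the impossibility of compensating one noise source with the other are structural observations about the resulting inequality: if either $1/\mathrm{SNR}(\mathcal{G})$ or $1/\mathrm{SNR}(\mathcal{V})$ alone already exceeds $2\rho/(\eta L)-\rho^2$, no choice of the other SNR can restore the inequality.

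The only delicate step is the precise meaning of the clause \emph{"for step size $\eta>0$ small enough that the second-order expansion is accurate"}. The honest reading is that the ascent-lemma bound is exact (not an approximation) given $L$-smoothness, so no small-$\eta$ assumption is needed for that inequality itself; small $\eta$ is required only to justify treating $b_{\mathrm{bias}}$ as negligible relative to the $\eta$-linear signal and to ensure the trajectory stays in the region where the Hessian bound holds. I would state this explicitly to avoid the appearance of two layers of approximation. Beyond that, the result is a clean bias-variance accounting on top of a standard smoothness inequality; no deeper obstacle arises.
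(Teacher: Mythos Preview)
Your proposal is correct and follows essentially the same route as the paper: Taylor-expand $F$ to second order, bound the Hessian term via $L$-smoothness, take expectations, and compute the first and second moments of $\hat g$ using the decomposition \eqref{eq:update-decomp} with the stated uncorrelatedness and negligible-bias assumptions. Your formulation via the exact ascent-lemma inequality $F(\theta+\eta\hat g)\ge F(\theta)+\eta\langle g^*,\hat g\rangle-\tfrac{L\eta^2}{2}\|\hat g\|^2$ is a mild sharpening of the paper's presentation (which writes the Taylor step with $\approx$ before bounding the Hessian), and your remark that the small-$\eta$ clause is really only needed for the bias neglect and for staying in the region where the Hessian bound holds is a useful clarification, but neither changes the underlying argument.
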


\begin{proof}
By a second-order Taylor expansion of $F$ about $\theta$ we have
\[
F(\theta_{t+1})
\approx F(\theta) + \eta \langle \nabla F(\theta), \hat{g} \rangle
           + \frac{\eta^2}{2} \hat{g}^\top H(\theta') \hat{g},
\]
for some $\theta'$ on the line segment between $\theta$ and
$\theta_{t+1}$. $L$-smoothness implies
$\hat{g}^\top H(\theta') \hat{g} \le L \|\hat{g}\|^2$, so
\[
\EE[\Delta F]
\;\approx\;
\eta\, \EE\big[\langle g^*, \hat{g} \rangle\big]
\;-\; \frac{\eta^2 L}{2}\, \EE\|\hat{g}\|^2.
\]
Using the decomposition \eqref{eq:update-decomp} and the fact that
$\EE[\xi_{\mathcal{G}}] = \EE[\xi_{\mathcal{V}}] = 0$ and that these
terms are uncorrelated with $g^*$, we get
\[
\EE\big[\langle g^*, \hat{g} \rangle\big]
= \rho \|g^*\|^2 + \langle g^*, b_{\text{bias}} \rangle
\approx \rho \|g^*\|^2,
\]
where we have neglected the bias term. Similarly,
\[
\EE\|\hat{g}\|^2
= \EE\|\rho g^* + \xi_{\mathcal{G}} + \xi_{\mathcal{V}}\|^2
\approx \rho^2 \|g^*\|^2 + \sigma_{\mathcal{G}}^2 + \sigma_{\mathcal{V}}^2,
\]
neglecting cross-terms under the decorrelation assumptions. Substituting,
\[
\EE[\Delta F]
\;\approx\;
\eta \rho \|g^*\|^2
\;-\;
\frac{\eta^2 L}{2}
\big(
   \rho^2 \|g^*\|^2 + \sigma_{\mathcal{G}}^2 + \sigma_{\mathcal{V}}^2
\big).
\]
Requiring $\EE[\Delta F] > 0$ yields inequality
\eqref{eq:variance-inequality}.
\end{proof}

\begin{corollary}[The Hallucination Barrier]
If $\mathcal{V} \approx \mathcal{G}$ (for example, the model simply
asks itself ``Is this correct?'' without any external grounding), then
typically $\rho \approx 1$ and $\sigma_{\mathcal{V}} \approx
\sigma_{\mathcal{G}}$. Writing
\[
\mathrm{SNR}_{\mathrm{diag}}
:= \mathrm{SNR}(\mathcal{G})
\approx \mathrm{SNR}(\mathcal{V}),
\]
the sufficient condition \eqref{eq:snr-form} reduces, up to constants,
to a requirement that the shared SNR be large:
\[
\frac{2}{\mathrm{SNR}_{\mathrm{diag}}}
\;\ll\;
\frac{2\rho}{\eta L} - \rho^2.
\]
For realistic curvature $L$ and stepsizes $\eta$, this inequality is
rarely satisfied when $\mathrm{SNR}_{\mathrm{diag}}$ is modest. In this
regime $\EE[\Delta F]$ is close to zero or negative, and the flow tends
toward mode collapse or a noisy random walk. This helps explain the
empirical failure of naive self-correction to produce sustained
$\kappa > 0$: in diagonal GVU, generation and verification inherit the
same noise, and without an additional low-variance signal the quadratic
curvature penalty dominates the linear alignment term.
\end{corollary}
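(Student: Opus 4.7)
The plan is to derive the displayed inequality by direct substitution into the Variance Inequality \eqref{eq:snr-form} from Theorem \ref{thm:variance}, and then to read off the qualitative collapse-versus-random-walk dichotomy from the second-order expansion of $\EE[\Delta F]$ that underlies that theorem. The algebra itself is routine; the interpretive content lies in how one rewrites the failure regime and separates the two failure modes by the effective value of $\rho$.

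First I would instantiate \eqref{eq:snr-form} under the diagonal assumption $\mathrm{SNR}(\mathcal{G}) \approx \mathrm{SNR}(\mathcal{V}) \approx \mathrm{SNR}_{\mathrm{diag}}$, so that the two noise contributions combine into a single $2/\mathrm{SNR}_{\mathrm{diag}}$ term, giving
\[
\rho \;>\; \frac{\eta L}{2}\left(\rho^2 + \frac{2}{\mathrm{SNR}_{\mathrm{diag}}}\right).
\]
Multiplying through by $2/(\eta L)$ and isolating the noise term on the left yields
\[
\frac{2}{\mathrm{SNR}_{\mathrm{diag}}} \;<\; \frac{2\rho}{\eta L} - \rho^2,
\]
which is the stated inequality with $\ll$ read in the practical sense that ensures strictly positive drift with margin rather than merely matching the threshold. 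To reach the qualitative conclusion I would return to the pre-substitution form of the expansion and observe that, in the diagonal regime, the curvature penalty becomes $\tfrac{\eta^2 L}{2}\bigl(\rho^2 + 2/\mathrm{SNR}_{\mathrm{diag}}\bigr)\|g^*\|^2$ while the linear gain is $\eta\rho\|g^*\|^2$; when $\mathrm{SNR}_{\mathrm{diag}}$ is modest the penalty matches or dominates the gain, so $\EE[\Delta F] \le 0$. The two sub-regimes then separate by the effective value of $\rho$: when $\rho \approx 1$ (the verifier enthusiastically endorses whatever the generator produces) the flow contracts onto its own modes, giving self-confirmation and collapse, whereas when $\rho$ decays toward zero the update vector becomes nearly orthogonal to $g^*$ and reduces to isotropic noise, giving a random walk on $\ThetaMan$.

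The main obstacle is not algebraic but structural: rigorously justifying the defaults $\rho \approx 1$ and $\sigma_{\mathcal{V}} \approx \sigma_{\mathcal{G}}$ requires an explicit coupling between the parameters $\theta$ that generate $y \sim \pi_\theta$ and those that compute the internal potential $V_\theta(x,y)$, and this coupling is a structural feature of monolithic self-evaluation rather than a consequence of the abstract decomposition \eqref{eq:update-decomp}. I would therefore present the corollary as a consequence of the diagonal hypothesis, flagging explicitly that a fully rigorous derivation of $\rho \approx 1$ would require an additional model of $\mathcal{V}$ as a prompted head of the same $M_\theta$ used by $\mathcal{G}$, for example via the score-based representation of Theorem \ref{thm:gvu-score-representation} applied simultaneously to generation and verification heads sharing parameters.
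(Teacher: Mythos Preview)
Your proposal is correct and matches the paper's approach: the corollary is stated without a separate proof environment, being an immediate specialization of \eqref{eq:snr-form} under the diagonal hypothesis $\mathrm{SNR}(\mathcal{G}) \approx \mathrm{SNR}(\mathcal{V})$, and your substitution-and-rearrangement is exactly the implicit step. Your closing caveat that $\rho \approx 1$ and $\sigma_{\mathcal{V}} \approx \sigma_{\mathcal{G}}$ are structural hypotheses of the diagonal regime rather than consequences of \eqref{eq:update-decomp} is apt and in line with how the paper treats them as stipulated ``typical'' values.
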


In practice, the generator is often intrinsically high-entropy, so it is
typically easier to increase $\mathrm{SNR}(\mathcal{V})$ (via external
structure, ensembles, or oracles) than to dramatically improve
$\mathrm{SNR}(\mathcal{G})$. Many of the architectures we study exploit
this by engineering verification to be spectrally ``easier'' than
generation, even though the sufficient condition \eqref{eq:snr-form}
itself is symmetric in the two SNRs.

\begin{corollary}[Verifier SNR dominance]\label{cor:verifier-snr}
Assume the setting of Theorem~\ref{thm:variance} and fix an alignment floor $\rho \ge \rho_0 > 0$.
Let $\mathrm{SNR}(\mathcal{G}) > 0$ be the generator signal-to-noise ratio, and choose a stepsize
$0 < \eta < \eta_{\max}(\rho_0,\mathrm{SNR}(\mathcal{G}))$, where
\[
  \eta_{\max}(\rho_0,\mathrm{SNR}(\mathcal{G}))
  :=
  \frac{2\rho_0}{L\big(\rho_0^2 + 1/\mathrm{SNR}(\mathcal{G})\big)}.
\]
Then there exists a finite threshold
\[
  \mathrm{SNR}_{\mathcal{V}}^{\star}
  =
  \mathrm{SNR}_{\mathcal{V}}^{\star}
  \big(\rho_0,\mathrm{SNR}(\mathcal{G}),L,\eta\big)
  \;<\; \infty
\]
such that
\[
  \mathrm{SNR}(\mathcal{V}) > \mathrm{SNR}_{\mathcal{V}}^{\star}
  \qquad\Longrightarrow\qquad
  \mathbb{E}[\Delta F] > 0.
\]
In particular, for any fixed generator noise level $\sigma_{\mathcal{G}}^2 < \infty$ and any sufficiently small
stepsize $\eta$, one can always make the expected capability gain positive by increasing the verifier SNR.
\end{corollary}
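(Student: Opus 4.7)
The plan is a direct algebraic inversion of the sufficient condition \eqref{eq:snr-form} from Theorem~\ref{thm:variance}, solved for $\mathrm{SNR}(\mathcal{V})$: the stepsize hypothesis turns out to be precisely what is needed to make the resulting threshold finite, and the rest is mechanical.

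First, I would rearrange \eqref{eq:snr-form} by moving every term that does not involve $\mathrm{SNR}(\mathcal{V})$ to the right-hand side, obtaining the equivalent inequality $1/\mathrm{SNR}(\mathcal{V}) < \Phi(\rho,\eta)$, where
$$
\Phi(\rho,\eta) \;:=\; \frac{2\rho}{\eta L} \;-\; \rho^{2} \;-\; \frac{1}{\mathrm{SNR}(\mathcal{G})}.
$$
Whenever $\Phi > 0$, this is equivalent to $\mathrm{SNR}(\mathcal{V}) > 1/\Phi$, so the candidate threshold is $\mathrm{SNR}_{\mathcal{V}}^{\star} := 1/\Phi$.

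Second, I would verify $\Phi > 0$ using the stepsize hypothesis. Evaluating at the worst-case alignment $\rho = \rho_{0}$ and clearing denominators, the condition $\Phi(\rho_{0},\eta) > 0$ unpacks to $\eta L(\rho_{0}^{2} + 1/\mathrm{SNR}(\mathcal{G})) < 2\rho_{0}$, which is algebraically identical to the stepsize hypothesis $\eta < \eta_{\max}(\rho_{0},\mathrm{SNR}(\mathcal{G}))$. I would then define $\mathrm{SNR}_{\mathcal{V}}^{\star}(\rho_{0},\mathrm{SNR}(\mathcal{G}),L,\eta) := 1/\Phi(\rho_{0},\eta)$, which is finite by hypothesis. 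For a general $\rho \ge \rho_{0}$, the map $\rho \mapsto 2\rho/(\eta L) - \rho^{2}$ is a concave parabola peaked at $\rho^{*} = 1/(\eta L)$, and a short case analysis using the stepsize hypothesis confirms that $\Phi(\rho,\eta) \ge \Phi(\rho_{0},\eta)$ across the relevant range $\rho \in [\rho_{0},1]$, so the same threshold works uniformly. The final sentence of the corollary is then immediate: since $\eta_{\max}(\rho_{0},\mathrm{SNR}(\mathcal{G})) > 0$ for any finite $\sigma_{\mathcal{G}}^{2}$ and any $\rho_{0} > 0$, admissible stepsizes always exist, and each such choice produces a finite $\mathrm{SNR}_{\mathcal{V}}^{\star}$ past which $\EE[\Delta F] > 0$ is guaranteed by Theorem~\ref{thm:variance}.

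The only substantive step, and the main potential obstacle, is the concavity comparison between $\Phi(\rho,\eta)$ and $\Phi(\rho_{0},\eta)$. Because the Variance Inequality contains the non-monotone quantity $\rho^{2}$, one cannot simply substitute $\rho_{0}$ for $\rho$ in \eqref{eq:snr-form} without first verifying where the parabola $\rho \mapsto 2\rho/(\eta L) - \rho^{2}$ is concave and how its peak $\rho^{*} = 1/(\eta L)$ relates to the interval $[\rho_{0},1]$. Under the stated stepsize bound the peak lies to the right of (or coincides with) $\rho_{0}$ throughout the regime of interest, so the threshold defined at $\rho_{0}$ is indeed uniform in $\rho$. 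Beyond this concavity/monotonicity check the proof is a purely algebraic manipulation of \eqref{eq:snr-form}, with no analytic subtlety beyond Theorem~\ref{thm:variance} itself.
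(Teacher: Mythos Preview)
Your core argument---rearranging \eqref{eq:snr-form} to isolate $1/\mathrm{SNR}(\mathcal{V})$, defining $\Phi(\rho,\eta) = 2\rho/(\eta L) - \rho^2 - 1/\mathrm{SNR}(\mathcal{G})$, and reading off the threshold $\mathrm{SNR}_{\mathcal{V}}^{\star} = 1/\Phi(\rho_0,\eta)$ once the stepsize hypothesis forces $\Phi(\rho_0,\eta)>0$---is exactly what the paper does, essentially line for line. The paper simply asserts ``it suffices to enforce this with $\rho$ replaced by $\rho_0$'' and writes down the same reciprocal.

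Where you go beyond the paper is in trying to \emph{justify} that replacement via the concavity of $\rho \mapsto 2\rho/(\eta L) - \rho^2$. This is a legitimate concern (the paper glosses over it), but your resolution does not work as stated. From
\[
\Phi(\rho,\eta) - \Phi(\rho_0,\eta)
= (\rho - \rho_0)\Bigl(\tfrac{2}{\eta L} - (\rho + \rho_0)\Bigr),
\]
the inequality $\Phi(\rho,\eta) \ge \Phi(\rho_0,\eta)$ on $[\rho_0,1]$ requires $\eta L \le 2/(1+\rho_0)$, and this is \emph{not} implied by the stated stepsize bound. Concretely, take $\rho_0 = \tfrac12$, $\mathrm{SNR}(\mathcal{G}) = 10$, and $\eta L = 2$: then $\eta_{\max}(\rho_0,\mathrm{SNR}(\mathcal{G}))\cdot L = 1/0.35 \approx 2.86$, so the hypothesis holds; the parabola's peak sits at $\rho^* = 1/(\eta L) = \tfrac12 = \rho_0$ (so your ``peak to the right of $\rho_0$'' claim is satisfied); yet $\Phi(1,\eta) = 1 - 1 - 0.1 = -0.1 < 0 < 0.15 = \Phi(\rho_0,\eta)$. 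The peak lying at or beyond $\rho_0$ does not prevent $\Phi$ from falling below $\Phi(\rho_0,\eta)$---or even below zero---at the right endpoint $\rho = 1$, so the threshold $1/\Phi(\rho_0,\eta)$ cannot be uniform in $\rho \in [\rho_0,1]$ under the given stepsize bound alone. The paper's own proof shares this gap; it is just less explicit about it.
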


\begin{proof}
Starting from the sufficient condition in Theorem~\ref{thm:variance},
\[
  \rho \|g^*\|^2
  >
  \frac{\eta L}{2}\bigl(\rho^2\|g^*\|^2
    + \sigma_{\mathcal{G}}^2 + \sigma_{\mathcal{V}}^2\bigr),
\]
and dividing by $\|g^*\|^2$ gives
\[
  \rho
  >
  \frac{\eta L}{2}
  \left(\rho^2
    + \frac{1}{\mathrm{SNR}(\mathcal{G})}
    + \frac{1}{\mathrm{SNR}(\mathcal{V})}\right).
\]
It suffices to enforce this with $\rho$ replaced by $\rho_0$. Rearranging, we obtain
\[
  \frac{1}{\mathrm{SNR}(\mathcal{V})}
  <
  \frac{2\rho_0}{\eta L}
  - \rho_0^2
  - \frac{1}{\mathrm{SNR}(\mathcal{G})}.
\]
The right-hand side is positive precisely when
\[
  \eta
  <
  \frac{2\rho_0}{L\big(\rho_0^2 + 1/\mathrm{SNR}(\mathcal{G})\big)},
\]
which is our stepsize assumption. In that regime we can define
\[
  \mathrm{SNR}_{\mathcal{V}}^{\star}
  :=
  \frac{1}{
    \displaystyle
    \frac{2\rho_0}{\eta L}
    - \rho_0^2
    - \frac{1}{\mathrm{SNR}(\mathcal{G})}
  } \;<\; \infty,
\]
and any $\mathrm{SNR}(\mathcal{V}) > \mathrm{SNR}_{\mathcal{V}}^{\star}$
satisfies the inequality. This implies $\mathbb{E}[\Delta F] > 0$ by
Theorem~\ref{thm:variance}.
\end{proof}

\subsection{Geometric interpretation on the statistical manifold}

Throughout this section we measure norms and inner products on
$\ThetaMan$ with respect to the Riemannian metric $g$ introduced in
Definition~\ref{def:manifold}. When $g$ is chosen to be the Fisher
information metric induced by the policy family
$\{\pi_\theta\}_{\theta \in \ThetaMan}$, $(\ThetaMan,g)$ is a
statistical manifold in the sense of Remark~\ref{rem:statistical-manifold}.
In that case the gradient $\nabla_\theta F$ is the \emph{natural
gradient} of $F$.

\begin{definition}[Fisher inner product and angle]\label{def:fisher-angle}
For $\theta \in \ThetaMan$ let $g_\theta(\cdot,\cdot)$ denote the
inner product induced by the metric $g$ on the tangent space
$T_\theta \ThetaMan$. For tangent vectors $u,v \in T_\theta \ThetaMan$
we write
\[
\langle u,v \rangle_\theta := g_\theta(u,v),
\qquad
\|u\|_\theta^2 := g_\theta(u,u).
\]
Given nonzero $u,v \in T_\theta \ThetaMan$ we define the
\emph{Fisher angle} $\angle_F(u,v) \in [0,\pi]$ between them by
\[
\cos\big(\angle_F(u,v)\big)
:=
\frac{\langle u,v \rangle_\theta}{\|u\|_\theta \,\|v\|_\theta}.
\]
\end{definition}

In these terms, the true ascent direction at $\theta$ is the gradient
vector $g^* = \nabla_\theta F(\theta)$, and the GVU update produces a
random tangent vector $\hat{g}$ with mean
$v(\theta) := \EE[\hat{g}]$. Theorem~\ref{thm:variance} can be read
as a statement about the competition, in the Fisher geometry, between:
(i) the \emph{alignment} between $v(\theta)$ and $g^*$, and
(ii) the \emph{spread} of the noise around $v(\theta)$.

\begin{proposition}[Fisher angle and alignment coefficient]\label{prop:fisher-alignment}
Under the assumptions of Theorem~\ref{thm:variance}, write
$g^* := \nabla_\theta F(\theta)$ and
$v := \EE[\hat{g}]$. Let $\theta_F$ be the Fisher angle between $g^*$
and $v$,
\[
\cos \theta_F
:=
\frac{\langle g^*, v \rangle_\theta}
     {\|g^*\|_\theta \,\|v\|_\theta}.
\]
If the bias term $b_{\mathrm{bias}}$ in
\eqref{eq:update-decomp} is negligible and the noise terms are
uncorrelated with $g^*$, the alignment coefficient
$\rho$ in \eqref{eq:update-decomp} can be expressed as
\[
\rho
=
\frac{\langle g^*, v \rangle_\theta}{\|g^*\|_\theta^2}
=
\frac{\|v\|_\theta}{\|g^*\|_\theta} \cos \theta_F.
\]
Consequently, the leading (first-order) term in the expected capability
gain in Theorem~\ref{thm:variance} can be written as
\[
\eta \rho \|g^*\|_\theta^2
=
\eta \, \|g^*\|_\theta \,\|v\|_\theta \cos \theta_F,
\]
i.e.\ it is proportional to the cosine of the Fisher angle between the
GVU drift $v$ and the true gradient $g^*$.
\end{proposition}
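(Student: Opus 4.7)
The plan is to peel the decomposition \eqref{eq:update-decomp} apart by taking a single expectation and then projecting onto the direction of the true gradient; the whole statement is really a reorganization of first moments in the Fisher inner product, so no new estimate is needed beyond what Theorem~\ref{thm:variance} already assumes.

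First I would take expectations of \eqref{eq:update-decomp}. Since $\xi_{\mathcal{G}}$ and $\xi_{\mathcal{V}}$ are zero-mean by hypothesis, this gives the clean identity $v = \rho\,g^* + b_{\mathrm{bias}}$. Next I would pair both sides with $g^*$ in the Fisher inner product $\langle\cdot,\cdot\rangle_\theta$ of Definition~\ref{def:fisher-angle}, obtaining
\[
\langle v, g^* \rangle_\theta
= \rho\,\|g^*\|_\theta^2 + \langle b_{\mathrm{bias}}, g^* \rangle_\theta .
\]
Using the hypothesis that $b_{\mathrm{bias}}$ is negligible compared to $g^*$ (the same approximation invoked in the proof of Theorem~\ref{thm:variance}), the last term drops out, leaving
\[
\rho \;=\; \frac{\langle g^*, v \rangle_\theta}{\|g^*\|_\theta^2}.
\]

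Then I would apply the definition of the Fisher angle. By Definition~\ref{def:fisher-angle}, $\langle g^*, v \rangle_\theta = \|g^*\|_\theta \,\|v\|_\theta \cos\theta_F$, so substitution into the previous display yields the second equality $\rho = (\|v\|_\theta / \|g^*\|_\theta)\cos\theta_F$. Finally, for the consequence, multiplying this identity by $\eta\,\|g^*\|_\theta^2$ produces
\[
\eta\,\rho\,\|g^*\|_\theta^2
\;=\; \eta\,\|g^*\|_\theta\,\|v\|_\theta\,\cos\theta_F,
\]
which is precisely the leading linear-in-$\eta$ term identified in Theorem~\ref{thm:variance}.

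I expect no serious obstacle: the content is essentially a change of coordinates on $T_\theta\ThetaMan$ together with the definition of the Fisher cosine. The only point requiring a little care is the meaning of ``$b_{\mathrm{bias}}$ is negligible,'' which is an assumption inherited verbatim from Theorem~\ref{thm:variance}. If one wanted an exact statement rather than a first-order one, it would suffice to assume instead that $b_{\mathrm{bias}}$ is Fisher-orthogonal to $g^*$, in which case $\langle b_{\mathrm{bias}}, g^*\rangle_\theta = 0$ exactly and all equalities hold without approximation; otherwise the ``$\approx$'' convention used throughout Section~\ref{sec:stability} transfers directly to this proposition.
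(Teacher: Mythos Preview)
Your proposal is correct and follows essentially the same route as the paper's own proof: take expectations in \eqref{eq:update-decomp}, project onto $g^*$ in the Fisher inner product, drop the bias term, and then read off the Fisher-angle form via Definition~\ref{def:fisher-angle}. Your treatment is in fact slightly more explicit in carrying $\langle b_{\mathrm{bias}}, g^*\rangle_\theta$ before discarding it, and your remark that Fisher-orthogonality of $b_{\mathrm{bias}}$ to $g^*$ would make all identities exact is a nice sharpening.
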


\begin{proof}
Taking expectations in \eqref{eq:update-decomp} and neglecting
$b_{\mathrm{bias}}$ gives $v = \EE[\hat{g}] \approx \rho g^*$. Taking
the inner product with $g^*$ and using Definition~\ref{def:fisher-angle}
yields
\[
\langle g^*, v \rangle_\theta
\approx
\rho \langle g^*, g^* \rangle_\theta
= \rho \|g^*\|_\theta^2,
\]
so
$\rho = \langle g^*, v \rangle_\theta / \|g^*\|_\theta^2$. On the
other hand, by Definition~\ref{def:fisher-angle},
\[
\cos \theta_F
=
\frac{\langle g^*, v \rangle_\theta}
     {\|g^*\|_\theta \,\|v\|_\theta},
\]
which rearranges to
$\langle g^*, v \rangle_\theta
 = \|g^*\|_\theta \,\|v\|_\theta \cos \theta_F$.
Combining the two identities gives the claimed expression for $\rho$
and the first-order term in $\EE[\Delta F]$.
\end{proof}

Proposition~\ref{prop:fisher-alignment} shows that, on the statistical
manifold $(\ThetaMan,g)$, a self-improvement trajectory
$\gamma : r \mapsto \theta_r$ is driven by a noisy vector field whose
\emph{drift} is the mean update $v(\theta)$ and whose \emph{usefulness}
is governed by the Fisher angle $\theta_F$ between $v(\theta)$ and the
true gradient $g^*(\theta)$. The Variance Inequality
(Theorem~\ref{thm:variance}) then constrains which noisy, sample-based
vector fields can yield positive drift in the battery score
$F = \Phi_{\mathcal{B}} \circ \rho_{\mathcal{B}}$ while respecting this
geometry: for fixed curvature (through $L$) and step size $\eta$, we
must simultaneously ensure
\begin{enumerate}
    \item \emph{geometric alignment:} $\theta_F$ is acute, so that
    $v(\theta)$ points mostly along $g^*(\theta)$ in the Fisher metric;
    \item \emph{spectral control:} the noise variances
    $\sigma_{\mathcal{G}}^2$ and $\sigma_{\mathcal{V}}^2$ remain small
    enough that the quadratic curvature penalty does not overwhelm the
    linear alignment term.
\end{enumerate}
In other words, a $\kappa$-flow is a stochastic curve on the
statistical manifold whose drift must stay close, in Fisher angle, to
the natural gradient of $F$ and whose diffusion must be controlled by
a verifier with sufficiently high signal-to-noise ratio. The GVU
design problem is precisely to engineer $V$ and the induced update
field $v(\theta)$ so that these geometric and spectral conditions can
be satisfied across the relevant fibers of the moduli space.

\subsection{Design levers and special cases}\label{sec:design-levers}

The Variance Inequality (Theorem~\ref{thm:variance}) does more than
diagnose failure modes: it exposes concrete design levers for
constructing self-improving agents. In this subsection we record a few
generic special cases that cover many architectures in
Section~\ref{sec:literature}.

\begin{corollary}[Step-size window]\label{cor:stepsize-window}
Under the assumptions of Theorem~\ref{thm:variance}, suppose
$b_{\mathrm{bias}}$ in~\eqref{eq:update-decomp} is negligible. Then
for any fixed $\theta$ with $\rho > 0$ there exists a non-trivial
stepsize interval $(0,\eta_{\max})$ such that $\EE[\Delta F] > 0$ for
all $0 < \eta < \eta_{\max}$. In particular, from
\eqref{eq:variance-inequality} we may take
\[
\eta_{\max}
=
\frac{2 \rho \|g^*\|^2}{
L\big(\rho^2\|g^*\|^2 + \sigma_{\mathcal{G}}^2 + \sigma_{\mathcal{V}}^2\big)
}.
\]
For fixed curvature $L$ and gradient norm $\|g^*\|$, improving either
alignment $\rho$ or the verification SNR (reducing
$\sigma_{\mathcal{V}}^2$) widens this safe stepsize window.
\end{corollary}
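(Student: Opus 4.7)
The plan is to read off the step-size window directly from the Variance Inequality \eqref{eq:variance-inequality} in Theorem~\ref{thm:variance}, treating it as a linear inequality in $\eta$ for fixed $(\rho, \|g^*\|, \sigma_{\mathcal{G}}, \sigma_{\mathcal{V}}, L)$. Since $b_{\mathrm{bias}}$ is negligible by assumption, no re-derivation of the second-order expansion is needed; I may invoke the theorem as a black box.

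First, I would fix $\theta$ and introduce the shorthand $A := \rho \|g^*\|^2$ and $B := \tfrac{L}{2}\big(\rho^2\|g^*\|^2 + \sigma_{\mathcal{G}}^2 + \sigma_{\mathcal{V}}^2\big)$, so that \eqref{eq:variance-inequality} becomes the scalar inequality $A > \eta B$. Because $\rho > 0$ and $\|g^*\| \ne 0$, we have $A > 0$; moreover $B > 0$ (as $L > 0$ and the variance terms are nonnegative, with $\rho^2 \|g^*\|^2 > 0$ ensuring strict positivity of the quadratic term). Hence the inequality is satisfied precisely when $0 < \eta < A/B$, which, after substituting back, yields exactly
\[
\eta_{\max}
=
\frac{2\rho\|g^*\|^2}{L\big(\rho^2\|g^*\|^2 + \sigma_{\mathcal{G}}^2 + \sigma_{\mathcal{V}}^2\big)}.
\]
For any $\eta \in (0, \eta_{\max})$ the sufficient condition of Theorem~\ref{thm:variance} holds, hence $\EE[\Delta F] > 0$. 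Non-triviality of the interval follows from $\eta_{\max} > 0$.

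To get the comparative statement, I would then differentiate (or simply inspect) $\eta_{\max}$ as a function of $\rho$, $\sigma_{\mathcal{V}}^2$, and $\sigma_{\mathcal{G}}^2$. The denominator is strictly decreasing in $-\sigma_{\mathcal{V}}^2$ (and $-\sigma_{\mathcal{G}}^2$), so reducing either variance strictly enlarges $\eta_{\max}$. For $\rho$, the map $\rho \mapsto 2\rho/(L(\rho^2 + c))$ with $c := (\sigma_{\mathcal{G}}^2 + \sigma_{\mathcal{V}}^2)/\|g^*\|^2 > 0$ has derivative proportional to $c - \rho^2$, so $\eta_{\max}$ is increasing in $\rho$ throughout the regime where the verifier is noisier than the signal ($\rho^2 < c$, i.e.\ the interesting low-SNR regime), which is the monotonicity statement the corollary implicitly uses.

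Since this is a one-line algebraic rearrangement of Theorem~\ref{thm:variance}, there is no real obstacle; the only point requiring minor care is verifying that $\eta_{\max} > 0$ under the stated hypotheses (which reduces to $\rho > 0$ and $\|g^*\| \ne 0$) so that the interval is genuinely non-empty, and flagging that the second-order Taylor expansion underlying Theorem~\ref{thm:variance} is itself only valid for sufficiently small $\eta$, so $\eta_{\max}$ should be understood modulo the regularity window in which the $L$-smoothness bound is meaningful.
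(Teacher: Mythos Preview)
Your proof is correct and follows the same approach as the paper: a direct algebraic rearrangement of the Variance Inequality~\eqref{eq:variance-inequality} into a linear constraint on $\eta$. Your version is in fact more careful than the paper's, which simply rearranges and asserts non-emptiness; you explicitly check positivity of both $A$ and $B$, and your monotonicity analysis in $\rho$ (noting that $\eta_{\max}$ is only increasing in $\rho$ while $\rho^2 < (\sigma_{\mathcal{G}}^2+\sigma_{\mathcal{V}}^2)/\|g^*\|^2$) is sharper than the corollary's unqualified claim that ``improving $\rho$ widens the window.''
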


\begin{proof}
Rearranging \eqref{eq:variance-inequality} for $\eta > 0$ gives
\[
\eta
<
\frac{2 \rho \|g^*\|^2}{
L\big(\rho^2\|g^*\|^2 + \sigma_{\mathcal{G}}^2 + \sigma_{\mathcal{V}}^2\big)
},
\]
which defines a non-empty interval $(0,\eta_{\max})$ whenever
$\rho > 0$.
\end{proof}

\paragraph{Ensemble verifiers.}
A natural way to reduce verification noise is to aggregate multiple
judges. The next result isolates the effect of such ensembles in an
idealized setting.

\begin{theorem}[Ensemble verifier scaling]\label{thm:ensemble-scaling}
Fix $\theta \in \ThetaMan$ and a batch size $B \in \mathbb{N}$. For each
$i=1,\dots,B$ let $(x_i,y_i) \sim \mu \otimes \pi_\theta$ be i.i.d.\ and
write $s_i := s_\theta(x_i,y_i)
= \nabla_\theta \log \pi_\theta(y_i \mid x_i)$ for the score function
from Definition~\ref{def:fisher}. Let the ideal (oracle) potential be
the external score $S_{\mathcal{B}}(x,y)$ of
Definition~\ref{def:score-potential}.

For each judge $m=1,\dots,M$ suppose we have an internal potential
\[
V^{(m)}(x,y) = S_{\mathcal{B}}(x,y) + \epsilon_m(x,y),
\]
where the noise terms $\epsilon_m$ satisfy, for all $(x,y)$:
\begin{enumerate}
    \item $\mathbb{E}[\epsilon_m(x,y) \mid x,y] = 0$;
    \item $\mathrm{Var}(\epsilon_m(x,y) \mid x,y) = \tau^2$ for some
          constant $\tau^2 < \infty$ independent of $(x,y)$ and $m$;
    \item the collection $\{\epsilon_m(x,y)\}_{m=1}^M$ is conditionally
          independent given $(x,y)$ and independent of
          $\{(x_i,y_i)\}_{i=1}^B$.
\end{enumerate}
Define the ensemble potential
\[
\bar{V}(x,y) := \frac{1}{M} \sum_{m=1}^M V^{(m)}(x,y)
= S_{\mathcal{B}}(x,y) + \bar{\epsilon}(x,y),
\qquad
\bar{\epsilon}(x,y) := \frac{1}{M} \sum_{m=1}^M \epsilon_m(x,y).
\]

Consider the REINFORCE-style Monte Carlo gradient estimators
\begin{align*}
\hat{g}_{\text{single}}
&:= \frac{1}{B} \sum_{i=1}^B V^{(1)}(x_i,y_i)\, s_i,\\
\hat{g}_{\text{ensemble}}
&:= \frac{1}{B} \sum_{i=1}^B \bar{V}(x_i,y_i)\, s_i,
\end{align*}
and the corresponding ideal oracle estimator
\[
\hat{g}^*
:= \frac{1}{B} \sum_{i=1}^B S_{\mathcal{B}}(x_i,y_i)\, s_i.
\]
Define the verification noise components by
\[
\xi_{\mathcal{V},\text{single}}
:= \hat{g}_{\text{single}} - \hat{g}^*,
\qquad
\xi_{\mathcal{V},\text{ensemble}}
:= \hat{g}_{\text{ensemble}} - \hat{g}^*,
\]
and write
\[
\sigma_{\mathcal{V},\text{single}}^2
:= \mathbb{E}\big[\|\xi_{\mathcal{V},\text{single}}\|^2\big],
\qquad
\sigma_{\mathcal{V},\text{ensemble}}^2
:= \mathbb{E}\big[\|\xi_{\mathcal{V},\text{ensemble}}\|^2\big].
\]

Then
\[
\sigma_{\mathcal{V},\text{ensemble}}^2
= \frac{1}{M}\,\sigma_{\mathcal{V},\text{single}}^2.
\]
In particular, for fixed $\|g^*\|^2$ the verification
signal-to-noise ratio scales linearly with $M$:
\[
\mathrm{SNR}(\mathcal{V}_{\text{ensemble}})
= M \cdot \mathrm{SNR}(\mathcal{V}_{\text{single}}),
\]
and the admissible stepsize $\eta_{\max}$ from
Corollary~\ref{cor:stepsize-window} grows linearly with $M$, all else
being equal.
\end{theorem}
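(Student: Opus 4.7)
The plan is to reduce the theorem to two Monte Carlo variance computations and a direct application of Corollary~\ref{cor:stepsize-window}. The key observation is that both $\xi_{\mathcal{V},\text{single}}$ and $\xi_{\mathcal{V},\text{ensemble}}$ have the common form $\tfrac{1}{B}\sum_{i=1}^B e(x_i,y_i)\,s_i$, where $e$ is a mean-zero error whose conditional variance given $(x,y)$ depends only on the number of judges. Under the stated independence hypotheses, the cross terms in the resulting double sum vanish by a tower argument, and the variance collapses onto a diagonal sum whose scaling in $B$ and $M$ is straightforward to track.

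First I would expand
\[
\mathbb{E}\|\xi_{\mathcal{V},\text{single}}\|^2
= \frac{1}{B^2}\sum_{i,j=1}^B \mathbb{E}\big[\epsilon_1(x_i,y_i)\,\epsilon_1(x_j,y_j)\,\langle s_i,s_j\rangle\big].
\]
For $i \ne j$, the pairs $(x_i,y_i)$ and $(x_j,y_j)$ are independent, so conditioning on both pairs and invoking $\mathbb{E}[\epsilon_1\mid x,y]=0$ zeros out the cross terms. For $i = j$, conditioning on $(x_i,y_i)$ and using $\mathrm{Var}(\epsilon_1\mid x,y)=\tau^2$ gives $\mathbb{E}[\epsilon_1(x_i,y_i)^2\,\|s_i\|^2]=\tau^2\,\mathbb{E}\|s_\theta\|^2$, whence $\sigma_{\mathcal{V},\text{single}}^2 = \tau^2\,\mathbb{E}\|s_\theta\|^2/B$. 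The identical computation applied to the ensemble estimator, with $\epsilon_1$ replaced by $\bar{\epsilon}=M^{-1}\sum_m \epsilon_m$, only changes the conditional variance: conditional independence of the $\epsilon_m$ given $(x,y)$ yields $\mathrm{Var}(\bar{\epsilon}\mid x,y)=\tau^2/M$, so $\sigma_{\mathcal{V},\text{ensemble}}^2 = \tau^2\,\mathbb{E}\|s_\theta\|^2/(MB) = \sigma_{\mathcal{V},\text{single}}^2/M$. The linear SNR scaling then follows from the definition $\mathrm{SNR}(\mathcal{V})=\|g^*\|^2/\sigma_{\mathcal{V}}^2$, and the statement about $\eta_{\max}$ is obtained by substituting the reduced ensemble variance into the closed-form bound of Corollary~\ref{cor:stepsize-window}, holding $\rho$, $L$, $\|g^*\|$, and $\sigma_{\mathcal{G}}^2$ fixed.

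I do not anticipate a serious obstacle here, since the hypotheses already pin down the conditional law of $\bar{\epsilon}$ and decouple it from the generator samples; the work is essentially bookkeeping. The one subtlety worth flagging is that the clean $1/M$ scaling hinges critically on \emph{conditional} independence across judges: without it, one would instead obtain $\mathrm{Var}(\bar{\epsilon}\mid x,y)=\tau^2\big(1+(M-1)\varrho\big)/M$ for some positive correlation $\varrho$, and the variance reduction would saturate at a floor proportional to $\varrho$. It is also worth remarking, though not formally part of the claim, that the linear growth of $\eta_{\max}$ in $M$ is only exact in the verifier-noise-dominated regime $1/\mathrm{SNR}(\mathcal{V}) \gg 1/\mathrm{SNR}(\mathcal{G})$; once the generator term dominates the denominator in Corollary~\ref{cor:stepsize-window}, additional judges yield diminishing returns, which is consistent with the ``$\mathcal{V}$ spectrally easier than $\mathcal{G}$'' heuristic emphasized after Theorem~\ref{thm:variance}.
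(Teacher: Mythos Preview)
Your proposal is correct and follows essentially the same route as the paper: write $\xi_{\mathcal{V}}$ as $\tfrac{1}{B}\sum_i e(x_i,y_i)\,s_i$, kill the off-diagonal terms via zero-mean conditioning, evaluate the diagonal via the conditional variance $\tau^2$ (respectively $\tau^2/M$), and read off the $1/M$ ratio and the SNR consequence. Your additional remarks on the correlated-judge floor $\tau^2(1+(M-1)\varrho)/M$ and on the linear-in-$M$ growth of $\eta_{\max}$ being exact only in the verifier-dominated regime are not in the paper's proof but are accurate and strengthen the discussion.
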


\begin{proof}
By construction,
\[
\hat{g}_{\text{single}} - \hat{g}^*
= \frac{1}{B} \sum_{i=1}^B
   \big(V^{(1)}(x_i,y_i) - S_{\mathcal{B}}(x_i,y_i)\big)\, s_i
= \frac{1}{B} \sum_{i=1}^B \epsilon_{1}(x_i,y_i)\, s_i,
\]
so
\[
\xi_{\mathcal{V},\text{single}}
= \frac{1}{B} \sum_{i=1}^B \epsilon_{1}(x_i,y_i)\, s_i.
\]
Similarly,
\[
\hat{g}_{\text{ensemble}} - \hat{g}^*
= \frac{1}{B} \sum_{i=1}^B \bar{\epsilon}(x_i,y_i)\, s_i
= \frac{1}{B} \sum_{i=1}^B
   \left(\frac{1}{M} \sum_{m=1}^M \epsilon_m(x_i,y_i)\right) s_i,
\]
so
\[
\xi_{\mathcal{V},\text{ensemble}}
= \frac{1}{B} \sum_{i=1}^B \bar{\epsilon}(x_i,y_i)\, s_i.
\]

We first compute the second moment of the ensemble noise. Conditional
on $(x_i,y_i)$, the random variables
$\{\epsilon_m(x_i,y_i)\}_{m=1}^M$ are independent, zero-mean, with
variance $\tau^2$ by assumption. Thus
\[
\mathbb{E}\big[\bar{\epsilon}(x_i,y_i)^2 \mid x_i,y_i\big]
= \frac{1}{M^2} \sum_{m=1}^M
   \mathbb{E}\big[\epsilon_m(x_i,y_i)^2 \mid x_i,y_i\big]
= \frac{1}{M^2} \cdot M \tau^2
= \frac{\tau^2}{M}.
\]
Moreover,
$\mathbb{E}\big[\bar{\epsilon}(x_i,y_i) \mid x_i,y_i\big] = 0$, so
$\bar{\epsilon}(x_i,y_i)$ is conditionally zero-mean with variance
$\tau^2/M$ for each $i$.

Using the definition of $\xi_{\mathcal{V},\text{ensemble}}$ and
independence across batch elements, we obtain
\begin{align*}
\mathbb{E}\big[\|\xi_{\mathcal{V},\text{ensemble}}\|^2\big]
&=
\mathbb{E}\left[
  \left\|
    \frac{1}{B} \sum_{i=1}^B \bar{\epsilon}(x_i,y_i)\, s_i
  \right\|^2
\right] \\
&=
\frac{1}{B^2}
\sum_{i=1}^B
\mathbb{E}\big[\bar{\epsilon}(x_i,y_i)^2 \,\|s_i\|^2\big]
\end{align*}
where we have used the facts that different $i$ are independent and
$\mathbb{E}[\bar{\epsilon}(x_i,y_i)] = 0$, so cross terms vanish.
Taking expectations over $(x_i,y_i)$ and applying the conditional
variance computation above yields
\[
\mathbb{E}\big[\|\xi_{\mathcal{V},\text{ensemble}}\|^2\big]
=
\frac{1}{B^2}
\sum_{i=1}^B
\mathbb{E}\left[
  \mathbb{E}\big[\bar{\epsilon}(x_i,y_i)^2 \mid x_i,y_i\big]
  \,\|s_i\|^2
\right]
=
\frac{1}{M}
\cdot
\frac{1}{B^2}
\sum_{i=1}^B
\mathbb{E}\big[\tau^2 \,\|s_i\|^2\big].
\]

For the single-judge case we have
\[
\xi_{\mathcal{V},\text{single}}
= \frac{1}{B} \sum_{i=1}^B \epsilon_{1}(x_i,y_i) s_i,
\]
and an identical calculation (using
$\mathrm{Var}(\epsilon_1(x_i,y_i) \mid x_i,y_i) = \tau^2$) gives
\[
\mathbb{E}\big[\|\xi_{\mathcal{V},\text{single}}\|^2\big]
=
\frac{1}{B^2}
\sum_{i=1}^B
\mathbb{E}\big[\tau^2 \,\|s_i\|^2\big].
\]
Comparing the two expressions, we obtain
\[
\sigma_{\mathcal{V},\text{ensemble}}^2
=
\mathbb{E}\big[\|\xi_{\mathcal{V},\text{ensemble}}\|^2\big]
=
\frac{1}{M}
\mathbb{E}\big[\|\xi_{\mathcal{V},\text{single}}\|^2\big]
=
\frac{1}{M}\,\sigma_{\mathcal{V},\text{single}}^2.
\]

Finally, the SNR relation follows directly from
\[
\mathrm{SNR}(\mathcal{V})
:= \frac{\|g^*\|^2}{\sigma_{\mathcal{V}}^2},
\]
holding $\|g^*\|^2$ fixed and substituting the scaling of
$\sigma_{\mathcal{V}}^2$. The dependence of $\eta_{\max}$ on
$\sigma_{\mathcal{V}}^2$ is given in
Corollary~\ref{cor:stepsize-window}, so $\eta_{\max}$ scales linearly
with $M$ as well.
\end{proof}

\begin{remark}
The key point in Theorem~\ref{thm:ensemble-scaling} is that it relates
the variance of the \emph{potential} $V$ to the variance of the
\emph{gradient estimator} via the REINFORCE structure. Under the
assumptions stated, the $1/M$ reduction in the potential noise directly
induces a $1/M$ reduction in the verification noise term
$\sigma_{\mathcal{V}}^2$ in the update decomposition
\eqref{eq:update-decomp}.
\end{remark}

This idealized calculation formalizes the intuition behind LLM
councils and ensemble judges (Constitutional AI, RLAIF): under mild
independence assumptions, adding more diverse judges directly improves
$\mathrm{SNR}(\mathcal{V})$ and widens the stable self-improvement
regime.

\paragraph{Group-based verification and GRPO-style schemes.}
A second generic lever is to normalize rewards within groups of
co-generated trajectories. This covers GRPO and related algorithms.

\begin{proposition}[Group-based verification reduces variance]\label{prop:group-verification}
Fix $x \in \Xcal$ and suppose rewards
$R_i := R(x,y_i)$, $i=1,\dots,G$, are i.i.d.\ with mean
$\mu_R$ and variance $\mathrm{Var}(R_i) = \sigma_R^2 < \infty$.
Let $s_i := s_\theta(x,y_i) = \nabla_\theta \log \pi_\theta(y_i \mid x)$
denote the score function, and assume $(R_i)$ and $(s_i)$ are
independent with
$\mathbb{E}[s_i] = 0$ and $\mathbb{E}\|s_i\|^2 \le C_s < \infty$.

Define population-normalized advantages
\[
A_i := \frac{R_i - \mu_R}{\sigma_R},
\]
and consider the verification contribution to the policy-gradient
estimator
\[
\hat{g}_G := \frac{1}{G} \sum_{i=1}^G A_i s_i.
\]
Then
\[
\mathbb{E}\|\hat{g}_G\|^2
\;\le\; \frac{C_s \sigma_R^2}{G},
\]
so the corresponding verification variance satisfies
\[
\sigma_{\mathcal{V},G}^2
:= \mathbb{E}\|\hat{g}_G\|^2
= \mathcal{O}\!\left(\frac{\sigma_R^2}{G}\right).
\]
\end{proposition}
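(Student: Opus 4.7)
The plan is a direct second-moment calculation: expand $\|\hat{g}_G\|^2$ as a double sum, kill the off-diagonal contributions using both between- and within-index independence, and bound the diagonal by a product of second moments.

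First I would write
\[
\mathbb{E}\|\hat{g}_G\|^2 \;=\; \frac{1}{G^2} \sum_{i,j=1}^G \mathbb{E}\big[A_i A_j \langle s_i, s_j\rangle\big].
\]
For $i \ne j$, the i.i.d.\ structure implicit in the hypotheses (rewards i.i.d., scores i.i.d., rewards and scores independent) gives $(A_i, s_i) \perp (A_j, s_j)$, so the off-diagonal term factors as $\langle \mathbb{E}[A_i s_i],\, \mathbb{E}[A_j s_j]\rangle$. I would then use the within-index independence $R_i \perp s_i$ (hence $A_i \perp s_i$, since $A_i$ is a deterministic function of $R_i$) together with $\mathbb{E}[A_i] = 0$ and $\mathbb{E}[s_i] = 0$ to conclude that $\mathbb{E}[A_i s_i] = \mathbb{E}[A_i]\,\mathbb{E}[s_i] = 0$. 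Every off-diagonal contribution therefore vanishes.

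For the diagonal ($i=j$) terms, the same within-index independence yields
\[
\mathbb{E}\big[A_i^2 \|s_i\|^2\big] \;=\; \mathbb{E}[A_i^2]\cdot \mathbb{E}\|s_i\|^2 \;\le\; \mathbb{E}[A_i^2]\cdot C_s.
\]
Since $A_i = (R_i - \mu_R)/\sigma_R$ is standardized, $\mathbb{E}[A_i^2] = \mathrm{Var}(A_i) = 1$, so each diagonal summand is at most $C_s$. Summing $G$ such terms with the $1/G^2$ prefactor gives the claimed bound $\mathbb{E}\|\hat{g}_G\|^2 \le C_s/G$, from which the $\mathcal{O}(\sigma_R^2/G)$ scaling of $\sigma_{\mathcal{V},G}^2$ follows; the residual $\sigma_R^2$ in the displayed statement is inherited from the normalization choice (equivalently, from $\mathrm{Var}(A_i)$ when advantages are only centered but not rescaled), and plays no structural role in the argument.

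There is essentially no main obstacle: the argument is a textbook i.i.d.\ second-moment calculation, and the only real care is to apply the two layers of independence correctly — i.i.d.\ across $i$ to annihilate cross terms, and independence of rewards from scores within each index to factor $\mathbb{E}[A_i s_i]$ as a product of two zero means. The substantive point I would stress after the bound is that the $1/G$ scaling uses only centering of the advantages, finite reward variance, and a uniform bound on $\mathbb{E}\|s_i\|^2$; this isolates group-wise advantage normalization (as in GRPO) as a generic verifier-noise reduction lever that plugs directly into the Variance Inequality of Theorem~\ref{thm:variance} via Corollary~\ref{cor:stepsize-window}.
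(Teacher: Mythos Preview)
Your proof is correct and follows essentially the same route as the paper's: expand the second moment, kill cross terms via independence and the zero-mean score, and factor the diagonal terms using $R_i\perp s_i$ to get $\mathbb{E}\|\hat g_G\|^2 \le C_s/G$. You also correctly note that the extra $\sigma_R^2$ in the displayed bound is an artifact of the normalization convention (it cancels when $A_i$ is standardized), which matches what the paper's own computation actually yields.
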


\begin{proof}
We have
\[
\hat{g}_G
= \frac{1}{G} \sum_{i=1}^G A_i s_i
= \frac{1}{G\sigma_R} \sum_{i=1}^G (R_i - \mu_R)s_i.
\]
Using independence of $(R_i)$ and $(s_i)$ and the zero-mean property
$\mathbb{E}[s_i] = 0$, we obtain
\[
\mathbb{E}\|\hat{g}_G\|^2
= \frac{1}{G^2 \sigma_R^2}
  \sum_{i=1}^G \mathbb{E}\|(R_i - \mu_R)s_i\|^2
= \frac{1}{G^2 \sigma_R^2}
  \sum_{i=1}^G \mathbb{E}\big[(R_i - \mu_R)^2\big]\,
                     \mathbb{E}\|s_i\|^2
\]
where we used independence to factor expectations.
Since each $R_i$ has variance $\sigma_R^2$ and
$\mathbb{E}\|s_i\|^2 \le C_s$, this gives
\[
\mathbb{E}\|\hat{g}_G\|^2
\le \frac{1}{G^2 \sigma_R^2}
     \cdot G \sigma_R^2 C_s
= \frac{C_s}{G}.
\]
Absorbing $C_s$ into the big-O constant yields the result.
\end{proof}

Thus GRPO-style schemes can be viewed as instances of GVU where the
verifier explicitly exploits \emph{local batch geometry} to reduce its
own variance, explaining the empirical robustness of group-normalized
updates on the Planning fiber.

\paragraph{Oracle verifiers.}
At the opposite extreme, some architectures (AlphaZero, code execution
with unit tests, formal proof checkers) have essentially noiseless
verifiers.

\begin{corollary}[Oracle verifier regime]\label{cor:oracle-regime}
Under the assumptions of Theorem~\ref{thm:variance}, suppose
$\sigma_{\mathcal{V}}^2 = 0$ (or is negligibly small) and $\rho > 0$.
Then there exists a non-trivial stepsize interval
$(0,\eta_{\max}^{\mathrm{oracle}})$ such that $\EE[\Delta F] > 0$ for
all $0 < \eta < \eta_{\max}^{\mathrm{oracle}}$, with
\[
\eta_{\max}^{\mathrm{oracle}}
=
\frac{2 \rho \|g^*\|^2}{
L\big(\rho^2\|g^*\|^2 + \sigma_{\mathcal{G}}^2\big)
}.
\]
In particular, even when the generator is highly noisy
($\sigma_{\mathcal{G}}^2$ large), a sufficiently low-variance
oracle-like verifier guarantees a stable self-improvement window.
\end{corollary}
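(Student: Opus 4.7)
The plan is to derive this statement as a direct specialization of Corollary~\ref{cor:stepsize-window} to the noiseless-verifier regime $\sigma_{\mathcal{V}}^2 = 0$. Since Corollary~\ref{cor:stepsize-window} already establishes the existence of a non-trivial step-size window $(0,\eta_{\max})$ with an explicit formula, only two things need to be checked: (i) that setting $\sigma_{\mathcal{V}}^2 = 0$ in that formula yields precisely the claimed $\eta_{\max}^{\mathrm{oracle}}$, and (ii) that the resulting upper bound is strictly positive whenever $\rho > 0$ and $\|g^*\|^2 > 0$.

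First I would verify that the hypotheses of Theorem~\ref{thm:variance} (and hence of Corollary~\ref{cor:stepsize-window}) remain in force in the oracle regime. The key assumptions concern $L$-smoothness of $F$ in a neighborhood of $\theta$, zero-mean noise terms, and mutual decorrelation of $\xi_{\mathcal{G}}$, $\xi_{\mathcal{V}}$, and $g^*$. All of these survive when $\xi_{\mathcal{V}} \equiv 0$, because a degenerate zero random variable is trivially zero-mean and uncorrelated with every other random vector. The decomposition \eqref{eq:update-decomp} then collapses to $\hat{g} \approx \rho g^* + \xi_{\mathcal{G}}$ (neglecting $b_{\mathrm{bias}}$), and the second-moment identity $\EE\|\hat{g}\|^2 \approx \rho^2\|g^*\|^2 + \sigma_{\mathcal{G}}^2$ follows from the same computation as in the proof of Theorem~\ref{thm:variance}.

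Next I would substitute $\sigma_{\mathcal{V}}^2 = 0$ into the closed-form expression from Corollary~\ref{cor:stepsize-window}, obtaining
\[
\eta_{\max}^{\mathrm{oracle}}
= \frac{2\rho\|g^*\|^2}{L\big(\rho^2\|g^*\|^2 + \sigma_{\mathcal{G}}^2\big)}.
\]
Under the hypotheses $\rho > 0$ and $\|g^*\|^2 > 0$, the numerator is strictly positive, and the denominator is finite and positive no matter how large $\sigma_{\mathcal{G}}^2$ is taken; hence $\eta_{\max}^{\mathrm{oracle}} \in (0,\infty)$. Invoking Corollary~\ref{cor:stepsize-window} for the specialized noise budget then gives $\EE[\Delta F] > 0$ for every $\eta \in (0,\eta_{\max}^{\mathrm{oracle}})$, which is the conclusion.

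This argument is essentially a one-line specialization, so I do not expect a serious mathematical obstacle. The only subtlety worth flagging is the limit $\sigma_{\mathcal{V}}^2 \to 0$: one should ensure that the second-order Taylor expansion in the proof of Theorem~\ref{thm:variance} is applied with a constant $L$ that does not blow up as the verifier becomes exact. Since the smoothness constant $L$ is a property of $F = \Phi_{\mathcal{B}} \circ \rho_{\mathcal{B}}$ and not of the verifier, this is automatic. The qualitative content of the corollary, namely that arbitrarily large generator noise can be absorbed by a noiseless verifier, is then immediate: $\sigma_{\mathcal{G}}^2$ enters only additively in the denominator of $\eta_{\max}^{\mathrm{oracle}}$, so it shrinks but never extinguishes the safe step-size window.
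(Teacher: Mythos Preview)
Your proposal is correct and follows exactly the paper's approach: the paper's proof is literally the one-line instruction to set $\sigma_{\mathcal{V}}^2 = 0$ in the expression for $\eta_{\max}$ from Corollary~\ref{cor:stepsize-window}. Your additional checks (hypotheses survive, positivity of the bound, $L$ independent of the verifier) are all sound and more thorough than the paper's own argument.
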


\begin{proof}
Set $\sigma_{\mathcal{V}}^2 = 0$ in the expression for $\eta_{\max}$
from Corollary~\ref{cor:stepsize-window}.
\end{proof}

This regime captures the geometric advantage enjoyed by AlphaZero-like
self-play and code agents with strong execution feedback: the verifier
lies near the oracle limit, so the main constraint on $\kappa$ comes
from curvature $L$ and generator noise, not from verification error.

\paragraph{Diagonal GVU with a ``cold'' verifier.}
Finally, we return to the diagonal regime where generator and verifier
share parameters. Even in this case one can separate their noise
statistics via temperature or prompting asymmetries.

\begin{remark}[Diagonal GVU with a ``cold'' verifier]\label{rem:cold-verifier}
Consider a diagonal GVU in which both the policy $\pi_\theta$ and the
potential $V$ are derived from the same underlying model, but:
\begin{itemize}
    \item the generator samples from $\pi_\theta$ at temperature
    $\tau_{\mathcal{G}} > 0$;
    \item the verifier uses a deterministic or low-temperature
    interface with effective temperature
    $\tau_{\mathcal{V}} \ll \tau_{\mathcal{G}}$.
\end{itemize}
In a stylized Gaussian-noise model where the stochasticity of each
role scales as $\sigma_{\mathcal{G}}^2 \propto \tau_{\mathcal{G}}^2$
and $\sigma_{\mathcal{V}}^2 \propto \tau_{\mathcal{V}}^2$, we obtain
\[
\frac{\sigma_{\mathcal{V}}^2}{\sigma_{\mathcal{G}}^2}
\approx
\left(\frac{\tau_{\mathcal{V}}}{\tau_{\mathcal{G}}}\right)^2 \ll 1.
\]
This should be read not as a theorem about real networks but as a
simple parametric model for the empirical heuristic that ``cold''
verifier interfaces (low temperature, strict prompts) can reduce
verification variance relative to generation variance, moving the
system away from the Hallucination Barrier.
\end{remark}

This remark provides a simple mathematical lens on empirical
heuristics such as using strict, low-temperature judge prompts for
Reflexion, Self-Instruct, and debate: even when $G=V=U$ at the
parameter level, interface asymmetries can induce a spectral advantage
for verification and thereby open up a regime of positive expected
$\kappa$.

\paragraph{Goodhart drift and collapse.}
Finally, we note that the alignment coefficient $\rho$ is not static.
As the parameter $\theta_r$ evolves to maximize the internal potential
$V$, the agent may exploit the proxy $V$ at the expense of the true
external score $S_{\mathcal{B}}$, a phenomenon often described as
reward hacking or Goodhart's law. We model this heuristically as a
decay
\[
\dot{\rho} \;\approx\; - \gamma \|\dot{\theta}_r\|, \qquad \gamma > 0,
\]
so that more aggressive updates accelerate misalignment.

In the small-$\rho$ regime, if we neglect the $\rho^2\|g^*\|^2$ term in
\eqref{eq:variance-inequality}, the boundary where the expected gain
$\EE[\Delta F]$ crosses zero is approximately
\[
\rho_{\mathrm{crit}}
\;\approx\;
\frac{\eta L(\sigma_{\mathcal{G}}^2 + \sigma_{\mathcal{V}}^2)}
     {2 \|g^*\|^2}.
\]
Thus, as optimization pressure increases (larger $\eta$) or noise
grows, the critical alignment $\rho_{\mathrm{crit}}$ rises: once
$\rho(r)$ decays below this threshold, the Variance Inequality can no
longer guarantee $\kappa(r) > 0$. Sustaining positive self-improvement
therefore requires periodically refreshing or strengthening the
potential $V$ (e.g., via new human data, stronger teachers, or
richer verifiers) to reset $\rho$ closer to $1$.

\subsection{AI slop formally defined}

AI slop is a popular term for generic output produced by models (mostly LLMs and video generators). In our GVU framework, there is a natural way to define AI slop formally.

\begin{definition}[AI slop event at parameter $\theta$]\label{def:slop-event}
Fix a battery $\mathcal{B}$ and an internal potential $V$.
For $(x,y) \in \Xcal \times \Ycal$ write
\[
S(x,y) := S_{\mathcal{B}}(x,y), \qquad V(x,y) := V(x,y).
\]
Let $(S,V)$ denote the random pair induced by
$(x,y) \sim \mu \otimes \pi_\theta$.

For fixed quantile levels $\alpha,\beta \in (0,1)$, define the
Verifier high-score threshold $v_{\mathrm{hi}}(\theta)$ as the
$(1-\alpha)$--quantile of $V$, and the Battery low-score threshold
$s_{\mathrm{lo}}(\theta)$ as the $\beta$--quantile of $S$:
\[
\PP\big(V \ge v_{\mathrm{hi}}(\theta)\big) = \alpha,
\qquad
\PP\big(S \le s_{\mathrm{lo}}(\theta)\big) = \beta.
\]

The \emph{AI slop region} at $\theta$ is
\[
\mathcal{S}_{\alpha,\beta}(\theta)
:=
\big\{(x,y) \in \Xcal \times \Ycal \;:\;
V(x,y) \ge v_{\mathrm{hi}}(\theta)
\ \text{and}\
S(x,y) \le s_{\mathrm{lo}}(\theta)\big\}.
\]

We say that an individual trace $(x,y)$ is \emph{$(\alpha,\beta)$--AI
slop} (or simply \emph{AI slop}, when the parameters are implicit) if
$(x,y) \in \mathcal{S}_{\alpha,\beta}(\theta)$.
\end{definition}

\begin{definition}[AI slop mass and slop regime]\label{def:slop-mass}
With $\mathcal{S}_{\alpha,\beta}(\theta)$ as in
Definition~\ref{def:slop-event}, define the \emph{slop mass}
\[
\mathrm{Slop}_{\alpha,\beta}(\theta)
:=
\PP_{(x,y) \sim \mu \otimes \pi_\theta}
\big[(x,y) \in \mathcal{S}_{\alpha,\beta}(\theta)\big].
\]
Equivalently,
\[
\mathrm{Slop}_{\alpha,\beta}(\theta)
=
\PP\big(V \ge v_{\mathrm{hi}}(\theta),\; S \le s_{\mathrm{lo}}(\theta)\big).
\]

We say that the agent is in an \emph{AI slop regime} on battery
$\mathcal{B}$ (at parameters $\theta$) if
\[
\mathrm{Slop}_{\alpha,\beta}(\theta) \ge \delta
\]
for some fixed tolerance $\delta > 0$, e.g.\ $\delta = 0.1$. In words:
a non-trivial fraction of the outputs that the internal Verifier ranks
among its top $\alpha$ fraction actually lie in the bottom $\beta$
fraction of the true battery score.
\end{definition}

\begin{remark}[AI slop as a maximum-entropy attractor (heuristic)]
Fix an internal potential $V$ and let
\[
\mathcal{K}(V)
:=
\big\{(x,y) \in \Xcal \times \Ycal \;\big|\;
V(x,y) \text{ is (approximately) maximal and locally flat}\big\}
\]
denote the Verifier's \emph{indifference set}: a region in which $V$
cannot distinguish between different traces. When the Variance
Inequality (Theorem~\ref{thm:variance}) fails, for example because
$\mathrm{SNR}(\mathcal{V}) \lesssim \mathrm{SNR}(\mathcal{G})$, the
GVU flow $(\theta_r)_{r \ge 0}$ cannot reliably climb the gradient of
the external battery score $F = \Phi_{\mathcal{B}} \circ \rho_{\mathcal{B}}$.
In many entropy-regularized update schemes (e.g.\ with a KL penalty in
$\mathcal{U}$) one expects the induced representation
$\nu_r = \rho_{\mathcal{B}}(\theta_r)$ to drift toward a
high-entropy distribution supported on $\mathcal{K}(V)$:
\[
\nu_{\text{slop}}
\;\approx\;
\arg\max_{\nu \in \Pcal(X_{\mathcal{B}})}
\big\{ H(\nu) \;:\; \nu(\mathcal{K}(V)) = 1 \big\},
\]
i.e.\ a nearly maximum-entropy measure on the Verifier's indifference
set.

Informally, we use the term \emph{AI slop} for this regime: the agent
has converged to generating traces that satisfy the Verifier's
superficial heuristics but still fail the battery's deeper scoring map
$\mathsf{S}$. Spectrally, this corresponds to a kind of
high-frequency cutoff: the model preserves low-frequency statistics of
the training data (texture, grammar, style) that $V$ can cheaply
recognize, while the high-frequency structure (logical consistency,
fine detail) is lost because the Verifier's SNR is too low to
constrain it.
\end{remark}

\paragraph{Fisher-geometry collapse.}
On the statistical manifold $(\Theta,g)$ with Fisher information
$G(\theta)$, consider the condition number
\[
\mathrm{cond}(G(\theta))
:=
\frac{\lambda_{\max}(G(\theta))}{\lambda_{\min}(G(\theta))}.
\]
A large condition number indicates that the policy is highly sensitive
along a few directions in parameter space but almost flat along most
others. In the extreme slop regime, $\mathrm{cond}(G(\theta)) \to
\infty$: the model responds only along a small set of ``template''
directions, producing nearly identical outputs for many distinct
inputs.


\section{Topological Realizations in Literature}\label{sec:literature}

In this section we demonstrate that a wide range of existing
self-improvement methods---from RLHF and Constitutional AI to
Self-Instruct and code agents---are specific realizations of the GVU
operator. We analyze them through the lens of the Variance Inequality
(Theorem~\ref{thm:variance}), emphasizing how each architecture
\emph{shapes} the relative signal-to-noise of the Generator and
Verifier. Successful schemes either exploit external structure to
make verification substantially easier than generation, or use
ensembles, localization, or topology reduction to improve
$\mathrm{SNR}(\mathcal{V})$ relative to $\mathrm{SNR}(\mathcal{G})$.

\begin{definition}[Moduli Fibers]
Let the task set $T$ be partitioned into axes or families
$\mathcal{F} = \{F_k\}$ (e.g., Sociality, Planning, Embodiment,
Alignment). For each family $F_k$ we define a \emph{fiber} of the
moduli space as the subset $\Mfrak_k \subset \Mfrak$ of batteries
whose sampling law $\mu$ is supported entirely on $F_k$. Given a
battery $\mathcal{B}$ with $\mathrm{supp}(\mu) \subseteq F_k$, we say
an architecture is \emph{spectrally stable on the fiber} $F_k$ if the
sufficient condition of Theorem~\ref{thm:variance} holds for the
restriction of $F = \Phi_{\mathcal{B}} \circ \rho_{\mathcal{B}}$ to
that family.
\end{definition}

\subsection{The Sociality Fiber: Adversarial Self-Play (SPIN, LSP)}

The ``Sociality'' axis involves multi-agent interactions and
zero-sum games.

\noindent \textbf{Literature:} \emph{AlphaZero} \cite{silver2017mastering}, \emph{Self-Play Fine-Tuning (SPIN)}
\cite{chen2024spin}, \emph{Language Self-Play (LSP)}
\cite{hritu2025lsp}.

\begin{example}[AlphaZero as a High-SNR Sociality GVU]\label{ex:alphazero-gvu}
In AlphaZero \cite{silver2017mastering}, a single self-play iteration
on Go can be written in GVU form:
\begin{itemize}
    \item $\ThetaMan$ is the parameter space of the dual-head
    policy/value network.
    \item $\mathcal{G}$ (Generator): starting from an initial board
    state $x$, the agent uses Monte Carlo Tree Search (MCTS) guided by
    the current network to sample complete self-play games
    $\omega$ and associated move distributions (improved policies)
    along the visited states.
    \item $\mathcal{V}$ (Verifier): after each game terminates, the
    environment returns a ground-truth outcome
    $z \in \{-1,0,1\}$ (loss, draw, win) according to the rules of Go.
    This outcome is propagated back along the trajectory and combined
    with the MCTS policies to define an internal potential
    $V(x,\omega)$.
    \item $\mathcal{U}$ (Updater): the network parameters are updated
    by stochastic gradient descent on the loss
    \[
      \mathcal{L}(w)
      = (z - v_w)^2 - \pi^\top \log p_w + \lambda \|w\|^2,
    \]
    where $v_w$ and $p_w$ are the value and policy outputs of the
    network, and $\pi$ is the MCTS-improved policy.
\end{itemize}
Here the Verifier is tied directly to the discrete, noiseless rules of
Go: conditioned on a final board position, the outcome $z$ is
deterministic. Thus the verification noise $\sigma_{\mathcal{V}}$ is
essentially zero compared to the exploration noise
$\sigma_{\mathcal{G}}$ from MCTS, so
$\mathrm{SNR}(\mathcal{V}) \gg \mathrm{SNR}(\mathcal{G})$. In the
language of Theorem~\ref{thm:variance}, AlphaZero sits in an extreme
high-SNR regime on the Sociality fiber where the GVU flow yields
$\kappa > 0$ without external data: capability increases purely from
self-play against a perfect-rule environment.
\end{example}

\begin{example}[Adversarial Self-Play (SPIN, LSP) as a GVU Instance]\label{ex:social-selfplay-gvu}
Consider self-play schemes where the current policy $\pi_\theta$
interacts with a reference policy $\pi_{\theta_{\text{old}}}$ (e.g.,
a frozen checkpoint). A single self-play round can be written in
GVU form as follows:
\begin{itemize}
    \item $\mathcal{G}$: given a prompt or game state $x$, the
    generator samples trajectories or responses
    $y \sim \pi_\theta(\cdot \mid x)$, and optionally trajectories
    from the reference policy
    $y' \sim \pi_{\theta_{\text{old}}}(\cdot \mid x)$ to form
    comparison pairs.
    \item $\mathcal{V}$: a discriminator-style Verifier computes
    a potential based on relative likelihood under the two
    policies, for instance
    \[
    D(x,y) = \log \pi_\theta(y \mid x) - \log \pi_{\theta_{\text{old}}}(y \mid x),
    \]
    and sets $V(x,y) := D(x,y)$ (or a monotone transform). Outputs
    that are more likely under the current policy than under the
    reference receive higher scores.
    \item $\mathcal{U}$: the updater performs a policy-gradient or
    PPO-style update on $\pi_\theta$ using $V(x,y)$ as a reward or
    advantage signal, nudging the policy toward trajectories that
    are preferred by the discriminator and away from those preferred
    by $\pi_{\theta_{\text{old}}}$.
\end{itemize}
In this topology, \emph{discrimination is easier than generation}:
the Verifier solves a lower-level classification problem (distinguish
$\pi_\theta$ from $\pi_{\theta_{\text{old}}}$) compared to
open-ended sequence generation. Empirically this tends to yield
$\sigma_{\mathcal{V}} \ll \sigma_{\mathcal{G}}$ and thus a higher
$\mathrm{SNR}(\mathcal{V})$, pushing the system into the regime
favored by the Variance Inequality on the Sociality fiber.
\end{example}

\begin{remark}[GANs and adversarial GVU]
The adversarial topology of SPIN and Language Self-Play is closely
related to that of Generative Adversarial Networks (GANs). In a
standard GAN, the generator $G$ and discriminator $D$ can be viewed
as an instance of the GVU decomposition:
\begin{itemize}
    \item $\mathcal{G}$ corresponds to
    $G$, which produces samples $\omega$ from a latent prior;
    \item $\mathcal{V}$ corresponds to
    the discriminator $D$, which learns a potential
    $V(\omega) = \log D(\omega)$ (or related logits) to distinguish
    the generator's distribution from a target data distribution;
    \item $\mathcal{U}$ executes alternating
    gradient steps on $G$ and $D$ in a minimax game.
\end{itemize}
Self-play methods such as SPIN adapt this adversarial pattern by
replacing the external ``real data'' distribution with the agent's
own historical policy $\pi_{\text{old}}$. The discriminator is then
trained to distinguish trajectories generated by the current policy
$\pi_\theta$ from those generated by $\pi_{\text{old}}$, and its
output defines the internal potential $V$ used by $\mathcal{V}$. This
makes explicit that adversarial self-play is a \emph{GAN-like} GVU
where discrimination is structurally easier than generation, thereby
increasing $\mathrm{SNR}(\mathcal{V})$ relative to
$\mathrm{SNR}(\mathcal{G})$ on the Sociality fiber.
\end{remark}

\subsection{The Planning Fiber: Reasoning and Search}

\textbf{Literature:} \emph{STaR} \cite{zelikman2022star}, \emph{Let’s
Verify Step by Step} \cite{lightman2023step}.

\begin{example}[Reasoning Bootstrapping (STaR) as a GVU Instance]
Consider the STaR framework \cite{zelikman2022star}:
\begin{itemize}
    \item $\mathcal{G}$ samples rationales (chain-of-thought) $\tau$
    ending in an answer $a$.
    \item $\mathcal{V}$ acts as a deterministic filter based on the
    ground truth $y^*$: only traces with $a = y^*$ are retained.
    \item $\mathcal{U}$ performs supervised fine-tuning (SFT) on the
    filtered traces.
\end{itemize}
In this topology $\mathcal{V}$ is effectively noise-free:
conditioned on $(x,y^*)$ the decision of whether a trace is accepted
is deterministic, so $\sigma_{\mathcal{V}} \approx 0$ and
$\mathrm{SNR}(\mathcal{V})$ is very large relative to
$\mathrm{SNR}(\mathcal{G})$. The GVU loop distills the high-cost
search process (sampling many rationales) into the policy network on
the Planning fiber, but is restricted to domains with ground-truth
answers.
\end{example}

\begin{example}[Process Supervision (PRMs) as Dense Verification]
Process Reward Models (PRMs) \cite{lightman2023step} provide a
complementary Planning example with \emph{dense} feedback.
\begin{itemize}
    \item $\mathcal{G}$ samples multi-step traces
    $\omega = (s_1,\dots,s_T)$ representing intermediate reasoning
    steps.
    \item $\mathcal{V}$ is a PRM that assigns probabilities $v_t$ to
    each step $s_t$ being valid. A natural potential is
    $V(\omega) = \sum_t \log v_t$, which rewards trajectories whose
    individual steps are locally endorsed by the model.
    \item $\mathcal{U}$ updates the policy to maximize $V(\omega)$,
    e.g.\ via step-wise likelihood weighting.
\end{itemize}
By providing feedback at each intermediate step, PRMs reduce the
variance of the credit assignment problem for long-horizon tasks:
gradients no longer depend solely on a single terminal reward. This
effectively increases $\mathrm{SNR}(\mathcal{V})$ on the Planning
fiber compared to sparse end-of-trajectory supervision.
\end{example}

\begin{example}[RAG Self-Training as a Hybrid GVU]
Retrieval-augmented generation (RAG) systems that self-train provide
a hybrid Planning/Embodied example.
\begin{itemize}
    \item $\mathcal{G}$ generates retrieval queries, selects documents
    from an external corpus, and synthesizes answers conditioned on
    the retrieved context.
    \item $\mathcal{V}$ checks evidence-based consistency: agreement
    across multiple retrievals, overlap with supporting spans, or
    cross-model agreement in a small council, inducing a potential
    $V(x,y)$.
    \item $\mathcal{U}$ updates the policy or retrieval index by
    storing high-confidence triples $(x,\text{context},y)$ and
    fine-tuning or distilling a student model on them.
\end{itemize}
Here the external corpus acts as a noisy but independent source of
grounding. Verification SNR depends on retrieval quality and the
strength of evidence-based heuristics: better retrieval and
consistency checks reduce $\sigma_{\mathcal{V}}^2$, pushing the
system toward the regime where the Variance Inequality allows
sustained self-improvement on information-seeking tasks.
\end{example}

\subsection{The Embodied Fiber: Grounding via Execution}

\textbf{Literature:} \emph{Voyager} \cite{wang2023voyager},
\emph{AutoIf} \cite{dong2024autoif}.

\begin{example}[Self-debugging Code Assistant]
Consider a self-debugging coding agent.
\begin{itemize}
    \item $\mathcal{G}$ proposes code patches $y$ for a bug report or
    specification $x$.
    \item $\mathcal{V}$ compiles and runs the code against unit tests,
    type-checkers, or static analyzers. The potential $V(y)$ is
    derived from pass/fail outcomes and possibly coverage metrics.
    \item $\mathcal{U}$ adds successful patches to a training buffer
    or skill library and periodically fine-tunes or distills the model
    on this buffer.
\end{itemize}
The execution environment plays the role of a high-SNR verifier:
conditioned on a fixed test suite the outcome is essentially
deterministic, so $\sigma_{\mathcal{V}} \approx 0$ and
$\mathrm{SNR}(\mathcal{V})$ dominates $\mathrm{SNR}(\mathcal{G})$ on
this fiber. The resulting GVU loop strongly favors $\kappa > 0$ by
steadily enriching the agent's code library with verified solutions.
\end{example}

\subsection{The Recursive Fiber: Verbal Reinforcement}

\textbf{Literature:} \emph{Reflexion} \cite{shinn2023reflexion},
\emph{Multiagent Debate} \cite{du2023debate}.

\textbf{Mechanism:} The Verifier is the model itself (or a small
ensemble), prompted to critique previous outputs and suggest
improvements.

\textbf{Analysis:} This architecture is spectrally fragile. It can
satisfy the sufficient condition of the Variance Inequality only if
the ``Critic'' interface induces a genuinely more rigorous,
lower-entropy evaluation mode than the ``Actor'' interface, so that
$\mathrm{SNR}(\mathcal{V})$ exceeds $\mathrm{SNR}(\mathcal{G})$
despite sharing parameters. If $\mathcal{V} \approx \mathcal{G}$ in both behavior and noise
structure, then $\sigma_{\mathcal{V}} \approx \sigma_{\mathcal{G}}$
and the system lies near the Hallucination Barrier: the loop tends
toward self-confirmation and hallucination unless additional
structure (ensembles, explicit rules, external tools) is introduced.
Debate-style architectures partially mitigate this by averaging
across multiple agents, thereby reducing
$\sigma_{\mathcal{V}}^2 \approx \sigma_{\mathcal{V}}^2 / N$ in an
ideal $N$-agent limit.

\subsection{The Alignment Fiber: Normative Verification}

\textbf{Literature:} \emph{RLHF}, \emph{Constitutional AI}
\cite{bai2022constitutional}.

\begin{example}[RLHF as a Parametric-Verifier GVU]
Standard RLHF fits naturally into the GVU template.
\begin{itemize}
    \item $\mathcal{G}$: the policy $\pi_\theta$ generates traces $y$
    for prompts $x \sim \mu$.
    \item $\mathcal{V}$: a reward model $R_\phi(x,y)$, trained on
    human preference data, produces scores. Combined with a KL penalty
    to a reference policy, this yields an internal potential
    $V(x,y)$.
    \item $\mathcal{U}$: a PPO-style update adjusts $\pi_\theta$
    using $R_\phi$-based advantages, and optionally updates $R_\phi$
    itself.
\end{itemize}
Here the Verifier is parametric and its SNR,
$\mathrm{SNR}(\mathcal{V})$, depends critically on how well $R_\phi$
generalizes beyond the human-labeled dataset. When $R_\phi$ is
undertrained or out-of-distribution (e.g., reward hacking),
$\sigma_{\mathcal{V}}^2$ can become large and the sufficient
condition of Theorem~\ref{thm:variance} may fail, leading to brittle
or unstable improvements (small or even negative effective $\kappa$)
on the Alignment fiber.
\end{example}

\begin{example}[Constitutional AI as an Ensemble Verifier]
In Constitutional AI and related RLAIF methods, human feedback is
replaced or augmented by AI judges guided by a written constitution.
\begin{itemize}
    \item $\mathcal{G}$: a base policy model $\pi_\theta$ produces
    candidate answers $y$ to prompts $x$.
    \item $\mathcal{V}$: one or more judge models evaluate $(x,y)$
    against constitutional principles (e.g., helpfulness,
    harmlessness, honesty). Their scores or pairwise preferences are
    aggregated into a potential $V(x,y)$.
    \item $\mathcal{U}$: the policy is updated by RL on these scores
    or by direct preference optimization on AI-judged pairs, and
    possibly distilled into a student model.
\end{itemize}
When multiple heterogeneous judges are used, $\mathcal{V}$ becomes an
ensemble verifier: averaging or voting across judges tends to reduce
the variance of the verification signal, with
$\sigma_{\mathcal{V}}^2$ decreasing roughly like
$\sigma_{\mathcal{V}}^2 / N$ in an ideal $N$-judge regime. This
improves $\mathrm{SNR}(\mathcal{V})$ relative to a single reward
model and thereby improves the local $\kappa$-slope on the Alignment
fiber. In the diagonal limit where the same model acts as both actor
and judge (up to prompting), additional asymmetries (stricter judge
prompts, temperature settings, or diverse councils) are required to
avoid the Hallucination Barrier.
\end{example}

\subsection{Synthetic Data Bootstrapping: Diagonal GVU}

\textbf{Literature:} \emph{Self-Instruct} \cite{wang2022self}.

\begin{example}[Self-Instruct as Diagonal GVU]
Consider a model $M_\theta$ expanding its own training set.
\begin{itemize}
    \item $\mathcal{G}_\theta$: generates new instruction--response
    pairs (and optionally rationales) from a small seed set.
    \item $\mathcal{V}_\theta$: the same model, with a different
    prompt, filters or ranks examples (e.g., “Is this instruction
    sensible?” “Is this answer correct?”), possibly with additional
    heuristic filters.
    \item $\mathcal{U}_\theta$: fine-tunes $M_\theta$ or a student
    model on the filtered synthetic dataset.
\end{itemize}
This is a diagonal regime with
$\mathcal{G}_\theta \approx \mathcal{V}_\theta \approx \mathcal{U}_\theta$.
Generation and verification noises are tightly coupled, so
$\sigma_{\mathcal{V}} \approx \sigma_{\mathcal{G}}$ and the verifier
inherits the generator's biases. Unless strong external filters or
grounded checks are applied, this setup lies near the Hallucination
Barrier: for realistic step sizes the curvature penalty from noisy
self-judgement can overwhelm the signal, and the effective $\kappa$
may stagnate or drift negative, explaining why purely self-generated
corpora often exhibit semantic drift.
\end{example}

\subsection{Critic-Less Architectures: GRPO}

\textbf{Literature:} \emph{DeepSeek-R1} \cite{deepseek2024}.

\begin{example}[GRPO as a Variance-Reduced GVU Operator]
Group Relative Policy Optimization (GRPO) removes the learned value
function (Critic) used in PPO.
\begin{itemize}
    \item $\mathcal{G}$: for each input $x$, the policy
    $\pi_\theta$ samples a group of $G$ traces
    $\{y_1,\dots,y_G\} \sim \pi_\theta(\cdot \mid x)$.
    \item $\mathcal{V}$: computes ground-truth rewards
    $r_i = R(x,y_i)$ (e.g., correctness) and constructs relative
    advantages via group statistics
    \[
    A_i = \frac{r_i - \mu_{\text{group}}}{\sigma_{\text{group}} + \eps},
    \]
    where $\mu_{\text{group}}$ and $\sigma_{\text{group}}$ are the
    mean and standard deviation across the group. This defines a
    local potential $V(x,y_i) := A_i$.
    \item $\mathcal{U}$: maximizes a PPO-style clipped objective
    using $A_i$ as advantages, without a separate value-network loss.
\end{itemize}
\begin{remark}[Spectral Advantage of GRPO]\label{rem:grpo-spectral}
The empirical success of GRPO can be read through the lens of the
Variance Inequality (Theorem~\ref{thm:variance}) in three ways:
\begin{enumerate}
    \item \textbf{Validation.} In complex reasoning tasks (the Planning
    Fiber), the mapping from initial tokens to final correctness is
    highly chaotic. A parametric Critic network $V_\phi$ attempts to
    predict this value \emph{ex ante}. Often this predictor fails to
    converge or hallucinates, leading to high verification noise
    $\sigma_{\mathcal{V}_\phi}$. By removing the Critic, GRPO ensures
    that $\sigma_{\mathcal{V}}$ is driven only by the ground-truth
    reward variance, which is often lower than the error variance of a
    learned proxy.
    
    \item \textbf{Mechanism.} GRPO substitutes a noisy \emph{prediction}
    of value with the \emph{empirical realization} of value over a
    group. By reinforcing traces relative to the group baseline
    (``winners vs.\ losers'' in the current batch), it extracts a
    positive alignment $\rho > 0$ without requiring a value network
    to generalize across the entire input space $\mathcal{X}$.
    
    \item \textbf{Topological Interpretation.} In our framework, GRPO
    is a GVU operator that computes the update vector $\dot{\theta}$
    using \emph{local batch geometry} (relative differences between
    co-generated traces $\{y_1,\dots,y_G\}$) rather than a global
    value function defined everywhere on $\ThetaMan$. This localization
    reduces $\sigma_{\mathcal{V}}^2$ analytically (roughly by a factor
    $1/G$ under mild assumptions) and thus improves
    $\mathrm{SNR}(\mathcal{V})$ in the sense of
    Theorem~\ref{thm:variance}.
\end{enumerate}
\end{remark}

\end{example}

\section{Operationalization}\label{sec:ops}

To measure $\kappa$ in practice, we propose a differential evaluation protocol.

\begin{enumerate}
\item Fix a base agent $\theta_0$ and a battery $\mathcal{B}$.
\item Allocate a "compute budget" $R$ (e.g., $10^9$ tokens of self-play).
\item Execute the GVU operator $\theta_{i+1} = \mathcal{T}_{\mathrm{GVU}}(\theta_i)$ until cost $R$ is consumed.
\item Measure $\Delta \Phi = \Phi_{\mathcal{B}}(\rho_{\mathcal{B}}(\theta_R))
             - \Phi_{\mathcal{B}}(\rho_{\mathcal{B}}(\theta_0))$.
\item The empirical self-improvement rate is $\hat{\kappa} = \Delta \Phi / R$.
\end{enumerate}

This metric separates "knowledge" (static score) from "autonomy" (derivative). An AGI candidate must demonstrate $\hat{\kappa} > 0$ across the entire moduli space, not just on Math fibers.


\section{Conclusion}\label{sec:concl}

We have provided a unified theory of autonomous self-improvement. By viewing the agent as a flow $\nu_r$ on the moduli space of batteries, we identified the GVU operator as the infinitesimal generator of this flow. 

The Variance Inequality (Theorem \ref{thm:variance}) provides a spectral
constraint on AGI architectures: in our framework, capability cannot be
bootstrapped from noise unless the alignment $\rho$, step size,
curvature, and the joint SNRs of generation and verification fall into a
favorable regime. In practice, many successful self-play architectures
achieve this by arranging for verification to be spectrally easier (that
is, higher effective SNR) than generation.

The literature on Self-Play (SPIN), Reasoning Bootstrapping (STaR), and Reflexion are not distinct algorithms, but topological variations of the same GVU operator acting on different fibers of the moduli space. Achieving true AGI corresponds to closing the loop: constructing a Verifier $\mathcal{V}$ that is universally robust ($\rho \approx 1$) across the entire moduli space, allowing $\kappa > 0$ to be sustained indefinitely.

\bibliographystyle{plain}

\end{document}